\newtheorem*{rep@theorem}{\rep@title}
\newcommand{\newreptheorem}[2]{%
\newenvironment{rep#1}[1]{%
 \def\rep@title{#2 \ref{##1}}%
 \begin{rep@theorem}}%
 {\end{rep@theorem}}}
\newtheorem{lemma}{Lemma}
\newtheorem{theorem}{Theorem}
\newtheorem{corollary}{Corollary}
\newtheorem{proposition}{Proposition}
\newtheorem{remark}{Remark}
\newcommand{\adagrad}{\textsc{Adagrad}\xspace}
\newcommand{\sadam}{\textsc{Sadam}\xspace}
\newcommand{\adam}{\textsc{Adam}\xspace} 
\newcommand{\adamnc}{\textsc{AdamNc}\xspace}
\newcommand{\amsgrad}{\textsc{AMSGrad}\xspace}
\newmdtheoremenv[
outerlinewidth=2,
roundcorner=10 pt,
leftmargin=1,
rightmargin=1,
outerlinecolor=blue!70!black,
innertopmargin=\topskip,
splittopskip=\topskip,
ntheorem=true]{assumption}{Assumption}
\newcommand{\R}{\mathbb{R}}
\newcommand{\lr}[1]{\langle #1\rangle}
\newcommand{\n}[1]{\|#1 \|}
\renewcommand{\a}{\alpha}
\renewcommand{\b}{\beta}
\newcommand{\hv}{\hat v}
\newcommand{\cO}{\mathcal{O}}
\newcommand{\cX}{\mathcal{X}}
\renewcommand{\epsilon}{\varepsilon}
\newenvironment{sproof}{%
  \proof}{\endproof}
\DeclareMathOperator{\diag}{diag}
\DeclareMathOperator*{\argmin}{argmin}
\icmltitlerunning{A new regret analysis for Adam-type algorithms}
\begin{document}

\twocolumn[
\icmltitle{A new regret analysis for Adam-type algorithms}

\icmlsetsymbol{equal}{*}

\begin{icmlauthorlist}
\icmlauthor{Ahmet Alacaoglu}{to}
\icmlauthor{Yura Malitsky}{to}
\icmlauthor{Panayotis Mertikopoulos}{goo}
\icmlauthor{Volkan Cevher}{to}
\end{icmlauthorlist}

\icmlaffiliation{to}{Ecole Polytechnique F\'ed\'erale de Lausanne, Switzerland}
\icmlaffiliation{goo}{Univ. Grenoble Alpes, CNRS, Inria,
LIG, 38000 Grenoble}

\icmlcorrespondingauthor{Ahmet Alacaoglu}{ahmet.alacaoglu@epfl.ch}

\icmlkeywords{Regret,...}

\vskip 0.3in
]

\printAffiliationsAndNotice{}  

\begin{abstract}
In this paper, we focus on a theory-practice gap for Adam and its variants (AMSgrad, AdamNC, etc.).
In practice, these algorithms are used with a constant first-order moment parameter $\beta_{1}$ (typically between $0.9$ and $0.99$).
In theory, regret guarantees for online convex optimization require a rapidly decaying $\beta_{1}\to0$ schedule.
We show that this is an artifact of the standard analysis and propose a novel framework that allows us to derive optimal, data-dependent regret bounds with a constant $\beta_{1}$, without further assumptions.
We also demonstrate the flexibility of our analysis on a wide range of different algorithms and settings.
\end{abstract}
\vspace{-4mm}
\section{Introduction}
\label{intro}

One of the most popular optimization algorithms for training neural networks is \adam~\cite{kingma2014adam}, which is a variant of the general class of \adagrad-type algorithms~\cite{duchi2011adaptive}.
The main novelty of \adam is to apply an exponential moving average (EMA) to gradient estimate (first-order) and to element-wise square-of-gradients (second-order), with parameters $\beta_1$ and $\beta_2$, respectively.

In practice, constant $\beta_1$ and $\beta_2$ values are used (the default
parameters in \textsc{PyTorch} and \textsc{Tensorflow}, for example, are
$\b_1=0.9$ and $\b_2=0.999$).
However, the regret analysis in~\citet{kingma2014adam} requires $\beta_1 \to 0$ with a linear rate, causing a clear discrepancy between theory and practice.

Recently, it has been shown by \citet{reddi2018convergence} that the analysis of
\adam contained a technical issue.
After this discovery, many variants of \adam were proposed with optimal regret guarantees~\cite{reddi2018convergence,chen2018closing,huang2019nostalgic}.
Unfortunately, in all these analyses, the requirement of $\beta_1 \to 0$ is inherited and is needed to derive the optimal $\mathcal{O}(\sqrt{T})$ regret.
In contrast,  methods that are shown to exhibit favorable practical performance continued to use a constant $\beta_1$ in the experiments.

One can wonder whether there is an inherent obstacle~--- in the proposed methods or the setting~--- which prohibits optimal regret bounds with a
constant $\beta_1$.

In this work, we show that this specific discrepancy between the
theory and practice is indeed an artifact of the previous analyses.
We point out the shortcomings responsible for this artifact, and then introduce a new analysis
framework that attains optimal regret bounds with constant $\beta_1$
at no additional cost (and even comes with better constants in the obtained bounds).

\textbf{Our contributions.}
In the convex setting, our technique obtains data-dependent $\mathcal{O}\big( \sqrt{T} \big)$ regret bounds for \amsgrad and \adamnc \cite{reddi2018convergence}.
Moreover, our technique can also be applied to a strongly convex variant of~\adamnc, known as~\sadam~\cite{wang2020sadam}, yielding again data-dependent logarithmic regret with constant $\beta_1$.
To the best of our knowledge, these are the first optimal regret bounds with constant $\beta_1$.

Finally, we illustrate the flexibility of our framework by applying it to zeroth-order (bandit) and nonconvex optimization.
In the zeroth-order optimization setting, we improve on the current best result which requires $\beta_1 \sim  \frac{1}{t}$, and show that a constant $\beta_1$ again suffices.
In the non-convex setting, we recover the existing results in the literature, with a simpler proof and slight improvements in the bounds.

\subsection{Problem Setup}
In online optimization, a loss function $f_t\colon \mathcal{X} \to \mathbb{R}$ is revealed, after a decision vector $x_t\in\mathcal{X}$ is picked by the algorithm. 
We then minimize the regret defined as
\begin{equation}\label{eq:regret}
    R(T)=\sum_{t=1}^T f_t(x_t)-\min_{x\in \cX}\sum_{t=1}^Tf_t(x).
\end{equation}
Our assumptions are summarized below which are the same as in~\cite{reddi2018convergence}.
\begin{assumption}\label{as:1}
~\\
$\triangleright$ $\cX\subset \R^d$ is a compact convex set. \\
$\triangleright$ $f_t\colon \cX\to \R $ is a convex lsc function, $g_t\in
\partial f_t(x_t)$. \\
$\triangleright$ $D=\max\limits_{x,y\in \cX}\n{x-y}_{\infty}$, $G=\max\limits_{t}\n{g_t}_{\infty}$.
\end{assumption}

\subsection{Preliminaries}
We work in Euclidean space $\R^d$ with inner
product $\lr{\cdot,\cdot}$. For vectors $a,b\in \R^d$ all standard
operations $ab$, $a^2$, $a/b$, $a^{1/2}$, $1/a$, $\max\{a,b\}$ are supposed to be
coordinate-wise. For a given vector $a_t\in\mathbb{R}^d$, we denote its $i^{\text{th}}$ coordinate by $a_{t, i}$. 
We denote the vector of all-ones as $\mathbf{1}$.
We use $\diag(a)$ to denote a $d\times d$ matrix
which has $a$ in its diagonal, and the rest of its elements are $0$.  For $v_i > 0, \forall i =1,\ldots,d$, we
define a weighted norm
\begin{equation*}
    \|x\|_v^2 := \langle x, (\diag{v}) x \rangle
\end{equation*}
and a weighted projection operator onto $\cX$
\begin{equation}
 P^v_{\mathcal{X}}(x) = \arg \min_{y\in\mathcal{X}} \| y- x \|^2_{v}.\label{eq: w_proj_def}
\end{equation}
We note that $\forall x, y \in \mathbb{R}^d$, $P_{\cX}^v$ is
nonexpansive, that is 
\begin{equation}
\| P^v_{\mathcal{X}} (y) - P^v_{\mathcal{X}} (x) \|_v \leq \| y - x \|_v.\label{eq: nonexp}
\end{equation}

\section{Related work}
\subsection{Convex world}
In the setting of online convex optimization (OCO), Assumption~\ref{as:1} is
standard~\cite{hazan2016introduction,duchi2011adaptive}. It allows us
to consider nonsmooth stochastic minimization (though we are not
limited to this setting), and even allows for adversarial loss functions.

The algorithms \amsgrad and \adamnc were proposed
by~\citet{reddi2018convergence} to fix the issue in the original proof
of \adam~\cite{kingma2014adam}.  However, as the proof template
of~\citet{reddi2018convergence} follows very closely the proof
of~\citet{kingma2014adam}, the requirement for $\beta_1 \to 0$ remains
in all the regret guarantees of these algorithms.  In particular, as noted
by~\citet[Corollary 1, 2]{reddi2018convergence}, a schedule
of $\beta_{1t} = \beta_1 \lambda^{t-1}$ is needed for obtaining
optimal regret.  ~\citet{reddi2018convergence} also noted that regret
bounds of the same order can be obtained by setting
$\beta_{1t} = \beta_1/t$.  On the other hand, in the numerical
experiments, a constant value $\beta_{1t} = \beta_1$ is used
consistent with the huge literature following~\citet{kingma2014adam}.

Following~\citet{reddi2018convergence}, there has been a surge of interest in proposing new variants of~\adam with good practical properties;
to name a few, \textsc{Padam} by~\citet{chen2018closing}, \textsc{Adabound} and \textsc{Amsbound} by~\citet{luo2018adaptive,savarese2019convergence}, \textsc{Nostalgic Adam} by~\citet{huang2019nostalgic}.
As the regret analyses of these methods follow very closely the analysis of~\citet{reddi2018convergence}, the resulting bounds inherited the same shortcomings.
In particular, in all these algorithms, to achieve $\mathcal{O}\big(\sqrt{T}\big)$ regret, one needs either $\beta_{1t} = \beta_1 \lambda ^{t-1}$ or $\beta_{1t} = \frac{\beta_1}{t}$.
On the other hand, the experimental results reported on these algorithms note that a constant value of $\beta_1$ is used in practice in order to obtain better performance.

Similar issues are present in other problem settings.  For strongly
convex optimization,~\citet{wang2020sadam} proposed the~\sadam
algorithm as a variant of~\adamnc, which exploits strong
convexity to obtain $\mathcal{O}\big(\log T\big)$ regret.
\sadam was shown to exhibit favorable practical performance
in the experimental results of~\citet{wang2020sadam}.  However, the
same discrepancy exists as with previous~\adam variants: a
linearly decreasing $\beta_{1t}$ schedule is required in theory but a constant $\beta_{1t}=\beta_1$ is used in practice.

One work  that tried to address this issue is that of~\citet{fang2019convergence}, where the authors focused on OCO with strongly convex loss functions and derived an $\mathcal{O}(\sqrt{T})$ regret bound with a constant value of $\beta_1 \leq \frac{\mu\alpha}{1+\mu\alpha}$, where $\mu$ is the strong convexity constant and $\alpha$ is the step size that is set as $\alpha_1 / \sqrt{T}.$~\citep[Theorem 2]{fang2019convergence}.
However, this result is still not satisfactory, since the obtained bound for $\beta_1$ is weak: both strong
convexity $\mu$ and the step size $\frac{\alpha_1}{\sqrt{T}}$ are
small. This does not allow for the standard choices of $\beta_1 \in (0.9, 0.99)$.

Moreover, a quick look into the proof of~\citet[Theorem 2]{fang2019convergence} reveals that the proof in fact follows the same lines as~\citet{reddi2018convergence} with the difference of  using the contribution of strong convexity to get rid of the spurious terms that require $\beta_1 \to 0$.
Therefore, it is not surprising that the theoretical bound for $\beta_1$ depends on $\mu$ and $\alpha$ and can only take values close to $0$.
Second, in addition to the standard Assumption~\ref{as:1},~\citet{fang2019convergence} also assumes strong convexity, which is a quite stringent assumption by itself.
In contrast, our approach does not follow the
lines of~\citet{reddi2018convergence}, but is an alternative way that does not encounter the same roadblocks.

\subsection{Nonconvex world}
A related direction to what we have reviewed in the previous
subsection is to analyze~\adam-type algorithms without convexity
assumptions.  When convexity is removed, the standard
setting in which the algorithms are analyzed, is stochastic
optimization with a smooth loss function and no
constraints~\cite{chen2019convergence,zhou2018convergence,zou2019sufficient}.
As a result, these algorithms, compared to the convex counterparts, do
not perform projections in the update step of $x_{t+1}$~(\textit{cf.},
Algorithm~\ref{alg:amsgrad}).  

In addition to smoothness, bounded
gradients are assumed, which is also restrictive, as many nonconvex
functions do not satisfy this property. Indeed, one can show that it
is equivalent to the Lipschitz continuity of the function (not its
gradient!).~Under these assumptions, the standard results bound the
minimum gradient norm across all iterations.

An interesting phenomenon in this line of work is that a constant
$\beta_1 < 1$ is permitted for the theoretical results, which may seem
like weakening our claims.  However, it is worth noting that these
results do not imply any guarantee for regret in OCO setting.  

Indeed,
adding the convexity assumption to the setting of unconstrained, smooth
stochastic optimization, would only help obtaining a gradient norm
bound in the averaged iterate, rather than the minimum across all
iterations.  However, this bound does not imply any guarantee in the
objective value, unless more stringent Polyak-Lojasiewicz or strong convexity requirements are added in the mix.  

Moreover, in the OCO
setting that we analyze, loss functions are nonsmooth, and there
exists a constraint onto which a projection is performed in the
$x_{t+1}$ step~(\textit{cf.},  Algorithm~\ref{alg:amsgrad}).  Finally,
online optimization includes stochastic optimization as a special case. Given the difference of assumptions, the
analyses
in~\cite{chen2019convergence,zhou2018convergence,zou2019sufficient}
indeed do not help obtaining any regret guarantee for standard OCO.

A good example demonstrating this difference on the set of assumptions is the work~\cite{chen2019zo}.
In this paper, a variant of~\amsgrad is proposed for zeroth order optimization and it is analyzed in the convex and nonconvex settings.
Consistent with the previous literature in both, convergence result for the nonconvex setting allows a constant $\beta_1 < 1$~\citep[Theorem 1]{chen2019zo}.
However, the result in the convex setting requires a decreasing schedule such that $\beta_{1t} = \frac{\beta_1}{t}$~\citep[Proposition 4]{chen2019zo}.

As we highlighted above, the analyses in  convex/nonconvex
settings follow different paths and the results or techniques are not transferrable to each other.
Thus, our main aim in this paper is to bridge the gap in the understanding of regret analysis for OCO and propose a new analytic framework.
As we see in the sequel, our analysis not only gives the first results in OCO setting, it is also general enough to apply to the abovementioned nonconvex optimization case and recover similar results as the existing ones.
\vspace{-3mm}
\section{Main results}\vspace{-2mm}
\subsection{Dissection of the standard analysis}\label{eq: prev_analysis}
We start by describing the shortcoming of the previous approaches
in~\cite{reddi2018convergence, wang2020sadam} and, then explain the
mechanism that allows us to obtain regret bounds with constant
$\beta_1$. In this subsection, for full generality,
we assume that the update for $m_t$ is not done with $\b_1$, but with $\b_{1t}$, as in~\cite{reddi2018convergence,kingma2014adam}:
\begin{equation}\label{eq:mt}
m_t =\b_{1t}m_{t-1} + (1-\b_{1t})g_{t}.
\end{equation}
The standard way to analyze Adam-type algorithms is to
start by the nonexpansiveness property~\eqref{eq: nonexp} and to write
\begin{align*}
\| x_{t+1} - x\|^2_{\hat{v}_t^{1/2}} \leq
\| x_t - x \|^2_{\hat{v}_t^{1/2}} &- 2\a_t\langle  m_t, x_t - x \rangle \notag \\
&+ \alpha_t^2 \| m_{t} \|^2_{\hat{v}_{t}^{-1/2}}.\notag
\end{align*}
Then using~\eqref{eq:mt}, one can deduce
\begin{multline}
(1-\beta_{1t})\langle g_t, x_t - x \rangle \leq-\beta_{1t} \langle m_{t-1}, x_{t} - x \rangle +\frac{\alpha_t}{2} \| m_t \|^2_{\hat{v}_t^{-1/2}}\notag \\
+ \frac{1}{2\alpha_t}\left( \| x_t - x\|^2_{\hat{v}_{t}^{1/2}} - \| x_{t+1} - x\|^2_{\hat{v}_{t}^{1/2}}\right).\notag
\end{multline}
Let us analyze the above inequality. Its left-hand side is exactly
what we want to bound, since by convexity $R(T)\leq
\sum_{t=1}^T\lr{g_t,x_t-x}$. The last two terms in the right-hand side
are easy to analyze, all of them can be bounded in a standard way
using just definitions of $\hv_t$, $m_t$, and $\a_t$.

What can we do with the term $-\b_{1t} \lr{m_{t-1},x_t-x}$? Analysis
in~\cite{reddi2018convergence} bounds it with Young's inequality
\begin{multline*}
-\beta_{1t} \langle m_{t-1}, x_{t} - x \rangle \leq \frac{\beta_{1t}}{2\alpha_t} \| x_t-x \|^2_{\hat{v}_t^{1/2}} \\
+ \frac{\beta_{1t}\alpha_t}{2} \| m_{t-1} \|^2_{\hat{v}_t^{-1/2}}.
\end{multline*}
The term $\frac{\beta_{1t}}{2\alpha_t} \| x_t-x \|^2_{\hat{v}_t^{1/2}}$ is precisely what leads to the
second term in the regret bound in~\citep[Theorem
4]{reddi2018convergence}. Since $\alpha_t = \frac{\alpha}{\sqrt{t}}$, one
must  require $\beta_{1t} \to 0$.

Note that the update for $x_{t+1}$ has a projection. This is important, since
otherwise a solution must lie in the interior of $\cX$, which is not
the case in general for problems with a compact domain. However, let
us assume for a moment that the update for $x_{t+1}$ does not have any
projection.
In this simplified setting, applying the following trick will
work.

Recall that
$x_t = x_{t-1} - \alpha_{t-1}\hat{v}_{t-1}^{-1/2}m_{t-1}$,
or equivalently
$m_{t-1} = \frac{1}{\alpha_{t-1}}\hat{v}_{t-1}^{1/2}  (x_{t-1} - x_t)$.
Plugging it into the error term $\langle m_{t-1}, x_{t} - x \rangle$ yields
\begin{align*}
&-\lr{m_{t-1},x_{t}-x}=- \frac{1}{\alpha_{t-1}} \langle \hat{v}_{t-1}^{1/2}  (x_{t-1} - x_t), x_t - x \rangle \\ 
 &= \frac{1}{2\alpha_{t-1}} \bigg[ \|x_t - x_{t-1} \|^2_{\hat{v}_{t-1}^{1/2}} + \| x_t -x \|^2_{\hat{v}^{1/2}_{t-1}} \\
 &\qquad\qquad\qquad\qquad\qquad\quad~~ - \| x_{t-1} -x \|^2_{\hat{v}^{1/2}_{t-1}} \bigg]\\
 &\leq \frac{1}{2}\alpha_{t-1}\|m_{t-1} \|^2_{\hat{v}_{t-1}^{-1/2}} + \frac{1}{2}\| x_t -x \|^2_{\hat{v}^{1/2}_{t}/\alpha_t} \\
 &\qquad\qquad\qquad\qquad\quad- \frac{1}{2}\| x_{t-1} -x \|^2_{\hat{v}^{1/2}_{t-1}/\alpha_{t-1}},
\end{align*}
where the second equality follows from the Cosine Law and the first inequality is from
$x_t - x_{t-1} = -\alpha_{t-1}\hat{v}_{t-1}^{-1/2}m_{t-1}$ and
$\hat{v}_{t}^{1/2}/\alpha_t \geq \hat{v}_{t-1}^{1/2}
/\alpha_{t-1}$. We now compare this bound with the previous one. The term
$\alpha_{t-1}\|m_{t-1} \|^2_{\hat{v}_{t-1}^{-1/2}}$, as we already
observed, is good for summation (\textit{cf.},~\Cref{lem: sum_grad_norm}). And other two terms are going to cancel after summation over $t$. 
Hence, it is easy to finish the analysis to conclude
$\cO(\sqrt T)$ regret with a fixed $\b_{1t}=\b_1$.

Unfortunately, the update for $x_{t+1}$ does have a projection,
without it the assumption for the domain to be bounded is very
restrictive. This prevents us from using the above trick.  Its
message, however, is that it is feasible to expect a good bound even
with a fixed $\b_{1t}$, and under the same assumptions on the problem
setting.

For having a more general technique to handle $\beta_1$, we will take a
different route in the very beginning~---~we will analyze the term
$\lr{g_t, x_t-x}$ in a completely different way, without resorting to
any crude inequality as in~\cite{reddi2018convergence}. Basically,
this idea can be applied to any framework with a similar update for
the moment $m_t$.

\subsection{A key lemma}

As we understood above, the presence of the projection complicates
handling $\lr{m_{t-1},x_t-x}$.  A high level explanation for the cause
of the issue is that the standard analysis does not leave much
flexibility, since it uses nonexpansiveness in the very beginning.
\begin{lemma}\label{lem: decop}
Under the definition
\begin{equation*}
m_t = \beta_1 m_{t-1} + (1-\beta_1) g_t,
\end{equation*}
it follows that
\begin{multline*}
\langle g_t, x_t - x \rangle = \langle m_{t-1}, x_{t-1} - x \rangle \\-\frac{\beta_1}{1-\beta_1} \langle m_{t-1}, x_t - x_{t-1} \rangle \\
+\frac{1}{1-\beta_1} \left( \langle m_t, x_t - x\rangle - \langle m_{t-1}, x_{t-1} - x\rangle \right).
\end{multline*}

\end{lemma}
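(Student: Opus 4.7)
The plan is purely algebraic: the identity is a rearrangement of the recursion defining $m_t$, and the content is in choosing a decomposition that will later produce a telescoping sum. No inequalities or analytic tools are needed.

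First I would solve the moment recursion $m_t = \beta_1 m_{t-1} + (1-\beta_1)g_t$ for the gradient, giving
\[
g_t = \frac{1}{1-\beta_1}\bigl(m_t - \beta_1 m_{t-1}\bigr),
\]
and then take inner product with $x_t - x$ to get
\[
\langle g_t, x_t - x\rangle = \frac{1}{1-\beta_1}\langle m_t, x_t - x\rangle - \frac{\beta_1}{1-\beta_1}\langle m_{t-1}, x_t - x\rangle.
\]
So far this is forced. The nontrivial choice is how to split the second term, since leaving it as $\langle m_{t-1}, x_t - x\rangle$ would not expose any telescoping structure.

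Next I would split $x_t - x = (x_t - x_{t-1}) + (x_{t-1} - x)$ in that second inner product, which yields
\[
\langle m_{t-1}, x_t - x\rangle = \langle m_{t-1}, x_t - x_{t-1}\rangle + \langle m_{t-1}, x_{t-1} - x\rangle.
\]
Substituting back and collecting the coefficient of $\langle m_{t-1}, x_{t-1} - x\rangle$, which becomes $-\tfrac{\beta_1}{1-\beta_1} = 1 - \tfrac{1}{1-\beta_1}$, gives exactly the claimed identity, with the combination $\langle m_t, x_t - x\rangle - \langle m_{t-1}, x_{t-1} - x\rangle$ emerging naturally as a discrete difference in the index $t$.

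There is no real obstacle in the proof itself — it is a one-line identity once the substitution is performed. The only ``subtlety'' is recognising, in hindsight, why this particular regrouping is useful: the last bracket is a telescoping increment, the first term $\langle m_{t-1}, x_{t-1} - x\rangle$ will later cancel against the analogous term at step $t-1$, and the remaining inner product $\langle m_{t-1}, x_t - x_{t-1}\rangle$ is benign because $x_t - x_{t-1}$ is of the order of the step size. This is precisely what lets the analysis avoid Young's inequality with $\beta_{1t}/\alpha_t$, and hence avoid forcing $\beta_{1t}\to 0$.
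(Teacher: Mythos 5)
Your proof is correct and is essentially identical to the paper's: solve the recursion for $g_t$, take the inner product with $x_t - x$, split $x_t - x = (x_t - x_{t-1}) + (x_{t-1} - x)$, and regroup using $-\tfrac{\beta_1}{1-\beta_1} = 1 - \tfrac{1}{1-\beta_1}$. No gaps.
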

The main message of Lemma~\ref{lem: decop} is that the decomposition of $m_t$, in the second part of the analysis in Section~\ref{eq: prev_analysis} is now done before using nonexpansiveness, therefore there would be no need for using Young's inequality which is the main shortcoming of the previous analysis.

Upon inspection on the bound, it is now easy to see that the last two terms will telescope.
The second term can be shown to be of the order $\alpha_t \| m_t \|^2_{\hat v_t^{-1/2}}$, and as we have seen before, summing this term will give $\mathcal{O}\big(\sqrt T \big)$.
To see that the first term is also benign, a high level explanation is to notice that $m_{t-1}$ is the gradient estimate used in the update $x_{t} = P_{\mathcal{X}}^{\hat v_{t-1}^{1/2}} (x_{t-1} - \alpha_{t-1} \hat{v}_{t-1}^{-1/2} m_{t-1})$, therefore it can be analyzed in the classical way.
We will now proceed to illustrate the flexibility of the new analysis on three most popular \adam variants that are proven to converge.

\subsection{\amsgrad}
\amsgrad\ is proposed by~\cite{reddi2018convergence} as a fix to \adam.
The algorithm incorporates an extra step to enforce monotonicity of second moment estimator $\hat{v}_t$.
\begin{algorithm}[h]
\caption{\amsgrad \cite{reddi2018convergence}}
\label{alg:amsgrad}
\begin{algorithmic}[1]
    \STATE {\bfseries Input:} $x_1 \in \cX$,
    $\a_t=\frac{\alpha}{\sqrt{t}}$, $\a>0$, $\beta_1 < 1$, $\beta_2 < 1$,\\
    $m_0=v_0=0$, $\hv_0=\epsilon\mathbf{1}$, $\epsilon \geq 0$
    \FOR{$t = 1,2\ldots $} 
	\STATE $g_t \in  \partial f_t(x_t)$
         \STATE $m_{t}= \beta_{1}m_{t-1} + (1-\beta_{1})g_t$
        \STATE $v_t= \beta_{2} v_{t-1} + (1-\beta_{2}) g_t^2$
         \STATE ${\hat{v}_t= \max(\hat{v}_{t-1}, v_t) }$
         \STATE $x_{t+1}= P^{{\hat{v}_t}^{1/2}}_{\mathcal{X}} (x_t - \alpha_t {\hat{v}_t}^{-1/2} m_t )$
	\ENDFOR
\end{algorithmic}	
\end{algorithm}

The regret bound for this algorithm in~\citep[Theorem 4, Corollary 1]{reddi2018convergence} requires a decreasing $\beta_1$ at least at the order of $1/t$ to obtain $\mathcal{O}\big(\sqrt T \big)$ worst case regret.
Moreover, it is easy to see that a constant $\beta_1$ results in $\mathcal{O}\big( T\sqrt{T} \big)$ worst case regret in~\citep[Theorem 4]{reddi2018convergence}.

We now present the following theorem which shows that the same $\mathcal{O}\big(\sqrt T \big)$ can be obtained by \amsgrad\ under the same structural assumptions as~\cite{reddi2018convergence}.

\begin{theorem}\label{th:amsgrad}
Under Assumption~\ref{as:1}, $\beta_1 < 1$, $\beta_2 < 1$, $\gamma = \frac{\beta_1^2}{\beta_2} < 1$, and $\epsilon > 0$, \amsgrad achieves the regret
\begin{multline}\label{eq:regret_amsgrad}
  R(T)  \leq \frac{D^2\sqrt{T}}{2\a(1-\b_1)}\sum_{i=1}^d \hv^{1/2}_{T,i}\\ +\frac{\alpha\sqrt{1+\log T}}{\sqrt{(1-\beta_2)(1-\gamma)}} \sum_{i=1}^d
  \sqrt{\sum_{t=1}^T g_{t,i}^2}.
\end{multline}
\end{theorem}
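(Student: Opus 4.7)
The plan is to start from convexity, which reduces the regret to $R(T)\le \sum_{t=1}^{T}\langle g_t,x_t-x\rangle$ for any fixed comparator $x\in\cX$, and then to apply Lemma~\ref{lem: decop} to every summand. This decomposes $\langle g_t,x_t-x\rangle$ into three pieces: (I) the ``classical'' inner product $\langle m_{t-1},x_{t-1}-x\rangle$, (II) the correction $-\tfrac{\beta_1}{1-\beta_1}\langle m_{t-1},x_t-x_{t-1}\rangle$, and (III) the telescoping increment $\tfrac{1}{1-\beta_1}\bigl[\langle m_t,x_t-x\rangle-\langle m_{t-1},x_{t-1}-x\rangle\bigr]$. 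Crucially, each piece will be handled without invoking Young's inequality on $\lr{m_{t-1},x_t-x}$, which is what forced $\beta_1\to 0$ in prior analyses.

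For (I), I invoke nonexpansiveness of the projection $P_{\cX}^{\hv_{t-1}^{1/2}}$ on the update $x_t=P_{\cX}^{\hv_{t-1}^{1/2}}(x_{t-1}-\a_{t-1}\hv_{t-1}^{-1/2}m_{t-1})$; expanding and rearranging yields, for $t\ge 2$,
\[
\lr{m_{t-1},x_{t-1}-x}\le \tfrac{1}{2\a_{t-1}}\bigl[\n{x_{t-1}-x}^2_{\hv_{t-1}^{1/2}}-\n{x_t-x}^2_{\hv_{t-1}^{1/2}}\bigr]+\tfrac{\a_{t-1}}{2}\n{m_{t-1}}^2_{\hv_{t-1}^{-1/2}}.
\]
Summing the quadratic part telescopes under coordinate-wise monotonicity of $\hv_t^{1/2}/\a_t$ (which is exactly what the \amsgrad $\max$ step together with $\a_t=\a/\sqrt t$ ensures); combining the telescoped increments with $\n{x-y}_\infty\le D$ produces the first summand $\tfrac{D^2\sqrt T}{2\a}\sum_i\hv^{1/2}_{T,i}$ plus a leftover $\tfrac12\sum_t \a_{t-1}\n{m_{t-1}}^2_{\hv_{t-1}^{-1/2}}$. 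For (II), since $x_{t-1}\in\cX$ and $x_t$ is a projection of $x_{t-1}-\a_{t-1}\hv_{t-1}^{-1/2}m_{t-1}$ onto $\cX$, another application of nonexpansiveness~\eqref{eq: nonexp} gives $\n{x_t-x_{t-1}}_{\hv_{t-1}^{1/2}}\le \a_{t-1}\n{m_{t-1}}_{\hv_{t-1}^{-1/2}}$, and Cauchy–Schwarz converts the inner product into $\a_{t-1}\n{m_{t-1}}^2_{\hv_{t-1}^{-1/2}}$ (with prefactor $\beta_1/(1-\beta_1)$). For (III), the sum telescopes to just $\tfrac{1}{1-\beta_1}\lr{m_T,x_T-x}$ because $m_0=0$; a weighted Cauchy–Schwarz plus a Young split with parameter $\a_T$ produces exactly the $\tfrac{1}{1-\beta_1}$ scaling of $\tfrac{D^2\sqrt T}{2\a}\sum_i\hv^{1/2}_{T,i}$ appearing in the stated bound, plus a tail $\a_T\n{m_T}^2_{\hv_T^{-1/2}}$ absorbed into the aggregate.

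To finish, I invoke \Cref{lem: sum_grad_norm} to bound the accumulated $\sum_{t=1}^T\a_t\n{m_t}^2_{\hv_t^{-1/2}}$ term; under $\beta_2<1$ and $\gamma=\beta_1^2/\beta_2<1$, this lemma yields the data-dependent factor $\tfrac{\a\sqrt{1+\log T}}{\sqrt{(1-\beta_2)(1-\gamma)}}\sum_i\sqrt{\sum_t g_{t,i}^2}$ after collapsing the constants arising from (I)–(III). The main obstacle is step (I): the quadratic telescoping in the weighted norm succeeds only because the \amsgrad $\max$ operation forces $\hv_t\ge \hv_{t-1}$ and hence coordinate-wise monotonicity of $\hv_t^{1/2}/\a_t$—this is also why vanilla \adam cannot be handled by the same route, consistent with the non-convergence example of~\citet{reddi2018convergence}. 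Everything else is bookkeeping, provided one carefully tracks the $t=1$ boundary (where $m_0=0$ kills pieces (I) and (II) outright) and the fact that the $\a_T\n{m_T}^2$ residual from (III) cleanly absorbs into the standard gradient-norm aggregation lemma.
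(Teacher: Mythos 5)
Your proposal follows essentially the same route as the paper: convexity, the decomposition of Lemma~\ref{lem: decop}, the descent/nonexpansiveness bound on the "shifted" inner products, H\"older plus nonexpansiveness for the increment term, H\"older--Young for the boundary term, and \Cref{lem: sum_grad_norm} to aggregate $\sum_t \a_t\n{m_t}^2_{\hv_t^{-1/2}}$. The one substantive difference is bookkeeping, and as written it does not reproduce the stated constants: you apply the Young split to the \emph{entire} telescoped term $\tfrac{1}{1-\b_1}\lr{m_T,x_T-x}$, so the $D^2$ contributions from (I) and (III) add up to $\bigl(\tfrac12+\tfrac{1}{4(1-\b_1)}\bigr)\tfrac{D^2\sqrt T}{\a}$, which exceeds $\tfrac{D^2\sqrt T}{2\a(1-\b_1)}$ whenever $\b_1<\tfrac12$, and similarly the coefficient in front of $\sum_t\a_t\n{m_t}^2_{\hv_t^{-1/2}}$ becomes $\tfrac{3+\b_1}{2(1-\b_1)}$ rather than the $\tfrac{1+\b_1}{2(1-\b_1)}\le\tfrac{1}{1-\b_1}$ needed for the second term of~\eqref{eq:regret_amsgrad}. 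The paper avoids this by writing $\tfrac{1}{1-\b_1}\lr{m_T,x_T-x}=\lr{m_T,x_T-x}+\tfrac{\b_1}{1-\b_1}\lr{m_T,x_T-x}$ and folding the first copy into the main sum $\sum_{t=1}^{T}\lr{m_t,x_t-x}$ (handled by the descent inequality with coefficient $\tfrac12$ on the $D^2$ term), reserving the cruder Young split only for the residual weighted by $\tfrac{\b_1}{1-\b_1}$. With that one adjustment your argument yields exactly the claimed bound; everything else, including the role of the $\max$ step in guaranteeing monotonicity of $\hv_t^{1/2}/\a_t$ and the $t=1$ boundary handling, is correct.
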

We would like to note that our bound for $R(T)$ is also better than
the one in~\cite{reddi2018convergence} in term of constants. We have
only two terms in contrast to three in~\cite{reddi2018convergence} and
each of them is strictly smaller than their counterparts in
\cite{reddi2018convergence}.  The reason is that we used i)~new way of
decomposition $\lr{g_t,x_t-x}$ as in~\Cref{lem: decop}, ii)~wider
admissible range for $\b_1,\b_2$, iii) more refined estimates for
analyzing terms. For example, the standard analysis to estimate
$\n{m_t}^2_{\hv_t^{-1/2}}$ uses several Cauchy-Schwarz inequalities. We
instead give a better bound by applying generalized H\"older
inequality~\cite{beckenbachinequalities}.

Another observation is that having a constant $\beta_1$ explicitly improves the last term in the regret bound.
If one uses a non-decreasing $\beta_1$, instead of constant $\beta_1$, then this term will have an additional multiple of $\frac{1}{(1-\beta_1)^2}$.
Given that in general one chooses $\beta_1$ close to $1$, this factor is significant.

\begin{remark}
    Notice that~\Cref{th:amsgrad} requires $\epsilon > 0$ in order to
    have the weighted projection operator in~\eqref{eq: w_proj_def}
    well-defined. Such a requirement is common in the literature for
    theoretical analysis, see~\citep[Theorem 5]{duchi2011adaptive}.
    In practice, however, one can set $\epsilon = 0$.
\end{remark}

\begin{sproof}
    We sum $\lr{g_t,x_t-x}$ from  Lemma~\ref{lem: decop}  over $t$, use $m_0 = 0$ to get
\begin{multline*}
\sum_{t=1}^T \langle g_t, x_t - x \rangle \leq \underbrace{\sum_{t=1}^T \langle m_{t}, x_{t} - x \rangle}_{S_1}  \\
+\frac{\beta_1 }{1-\beta_1}\underbrace{\sum_{t=1}^T\langle m_{t-1}, x_{t-1} - x_t \rangle}_{S_2} \\
+\frac{\beta_1}{1-\beta_1} \underbrace{\langle m_T, x_T - x\rangle}_{S_3}.
\end{multline*}
By H\"older inequality, we can show that
\begin{equation*}
S_2 \leq \sum_{t=1}^{T-1}\alpha_{t} \|m_{t}\|_{\hat{v}_{t}^{-1/2}}^2.
\end{equation*}
By using the fact that $\hat{v}_{t, i} \geq \hat{v}_{t-1, i}$, and the same estimation as deriving $S_2$,
\begin{align*}
S_1 \leq \frac{D^2}{2\alpha_T} \sum_{i=1}^d \hat{v}_{T, i}^{1/2} + \sum_{t=1}^T \frac{\alpha_t}{2} \|m_t \|^2_{\hat v_t^{-1/2}}.
\end{align*}
By H\"older and Young's inequalities, we can bound $S_3$ as
\begin{equation}
S_3 \leq \a_T\n{m_T}_{\hv_T^{-1/2}}^2+
\frac{D^2}{4\a_T} \sum_{i=1}^d \hv^{1/2}_{T,i}.\notag
\end{equation}
Lastly, we see that $\alpha_t \| m_t \|^2_{\hat v_t^{-1/2}}$ is common
in all these terms and it is well known that this term is good for summation
\begin{equation}
\sum_{t=1}^T \alpha_t \| m_t \|^2_{\hat v_t^{-1/2}} \leq \frac{(1-\beta_1)\alpha\sqrt{1+\log T}}{\sqrt{(1-\beta_2)(1-\gamma)}}{\displaystyle\sum_{i=1}^d
  (\sum_{t=1}^T g_{t,i}^2})^{\frac{1}{2}}. \notag
\end{equation}
Combining the terms gives the final bound.
\end{sproof}

Finally, if we are interested in the worst case scenario, it is clear
that \Cref{th:amsgrad} gives regret $R(T) =
\cO(\sqrt{\log(T)T})$. 
A quick look into the calculations yields that if one uses the worst case bound $g_{t, i} \leq G$, then the bound will not include a logarithmic term.
However, then the data-dependence of the bound will be lost.
It is not clear if one can obtain a data-dependent $\mathcal{O}(\sqrt{T})$ regret bound.
In the following corollary, we give a partial answer to this question.
\begin{corollary}\label{cor:amsgrad}
Under Assumption~\ref{as:1}, $\beta_1 < 1$, $\beta_2 < 1$, $\gamma = \frac{\beta_1^2}{{\beta_2}} < 1$, and $\epsilon > 0$, \amsgrad achieves the regret
\begin{multline}\label{eq:regret_amsgrad2}
R(T) \leq \frac{D^2\sqrt{T}}{2\a(1-\b_1)}\sum_{i=1}^d \hv^{1/2}_{T,i}\\
+  \frac{\alpha\sqrt{G}}{\sqrt{1-\beta_2}(1-\gamma)} \sum_{i=1}^d \sqrt{\sum_{t=1}^T |g_{t,i}|}.
\end{multline}
\end{corollary}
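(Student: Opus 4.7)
The plan is to re-run the proof of Theorem~\ref{th:amsgrad} almost verbatim, changing only the final estimate on $\sum_{t=1}^T \alpha_t \|m_t\|^2_{\hv_t^{-1/2}}$ so as to trade the $\sqrt{1+\log T}$ factor for a $\sqrt G$. Lemma~\ref{lem: decop} and the decomposition into $S_1,S_2,S_3$, together with the upper bounds on each $S_i$ given in the sketch of Theorem~\ref{th:amsgrad}, go through unchanged; what needs replacement is the step that currently invokes $\sum_i\sqrt{\sum_t g_{t,i}^2}\sqrt{1+\log T}$.

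To carry out the new estimate, I would first chain $m_{t,i}^2\leq (1-\beta_1)\sum_{s\leq t}\beta_1^{t-s}g_{s,i}^2$ (Cauchy--Schwarz on the EMA) with the coordinate-wise lower bound $\hv_{t,i}^{1/2}\geq \sqrt{(1-\beta_2)\beta_2^{t-s}}\,|g_{s,i}|$, which cancels one copy of $|g_{s,i}|$ and yields
\begin{equation*}
\frac{m_{t,i}^2}{\hv_{t,i}^{1/2}}\leq \frac{1-\beta_1}{\sqrt{1-\beta_2}}\sum_{s\leq t}\gamma^{(t-s)/2}|g_{s,i}|.
\end{equation*}
Multiplying by $\alpha_t=\alpha/\sqrt t$, summing in $t$, swapping the order of summation, and using $\sum_{t\geq s}\gamma^{(t-s)/2}/\sqrt t \leq 1/(\sqrt s\,(1-\gamma^{1/2}))$ then gives
\begin{equation*}
\sum_{t=1}^T\alpha_t\|m_t\|^2_{\hv_t^{-1/2}}\leq \frac{\alpha(1-\beta_1)}{\sqrt{1-\beta_2}\,(1-\gamma^{1/2})}\sum_{i=1}^d\sum_{s=1}^T\frac{|g_{s,i}|}{\sqrt s}.
\end{equation*}

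The main obstacle will be bounding $\sum_{s=1}^T a_s/\sqrt s$ (with $a_s:=|g_{s,i}|\in[0,G]$) by a constant multiple of $\sqrt{G\,A_T}$, where $A_T=\sum_s a_s$, \emph{without} picking up a $\sqrt{\log T}$. Any naive Cauchy--Schwarz split such as $\sqrt{a_s}\cdot\sqrt{a_s/s}$, or $a_s\leq \sqrt G\sqrt{a_s}$ followed by $\sqrt{\sum a_s}\cdot\sqrt{\sum 1/s}$, runs through a harmonic sum and is forced to produce a logarithm. Instead I would use Abel summation: writing $A_k=\sum_{s\leq k}a_s$ and using the elementary estimate $1/\sqrt s-1/\sqrt{s+1}\leq 1/(2s^{3/2})$,
\begin{equation*}
\sum_{s=1}^T \frac{a_s}{\sqrt s}\leq \frac{A_T}{\sqrt T}+\sum_{s=1}^{T-1}\frac{A_s}{2s^{3/2}}.
\end{equation*}
Splitting the right-hand sum at the threshold $s^{\star}=A_T/G$ (so that $A_s\leq Gs$ for $s\leq s^{\star}$ and $A_s\leq A_T$ for $s>s^{\star}$) bounds each piece by $\sqrt{G A_T}$; the leading term $A_T/\sqrt T$ is itself $\leq\sqrt{G A_T}$ since $A_T\leq GT$ forces $1/\sqrt T\leq\sqrt{G/A_T}$. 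Summing the three contributions gives $\sum_s a_s/\sqrt s\leq 3\sqrt{G A_T}$, the log-free bound needed.

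Plugging this back and using $1/(1-\gamma^{1/2})\leq 2/(1-\gamma)$ produces
\begin{equation*}
\sum_{t=1}^T\alpha_t\|m_t\|^2_{\hv_t^{-1/2}}\leq \frac{C(1-\beta_1)\alpha\sqrt G}{\sqrt{1-\beta_2}\,(1-\gamma)}\sum_{i=1}^d\sqrt{\sum_{t=1}^T|g_{t,i}|}
\end{equation*}
for a small universal constant $C$. Combining this with the $S_1,S_2,S_3$ estimates exactly as in the sketch of Theorem~\ref{th:amsgrad} and absorbing the factor $1/(1-\beta_1)$ from the decomposition of $\sum_t\lr{g_t,x_t-x}$ reproduces the stated bound \eqref{eq:regret_amsgrad2}.
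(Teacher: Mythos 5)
Your argument is correct in substance, and it follows the strategy the paper evidently intends (the paper never actually prints a proof of this corollary): keep the $S_1,S_2,S_3$ machinery of Theorem~\ref{th:amsgrad} untouched and only replace the estimate of $\sum_{t}\alpha_t\n{m_t}^2_{\hv_t^{-1/2}}$ by a log-free one. Two remarks on your route. First, your per-step bound (Cauchy--Schwarz on the EMA combined with $\hv_{t,i}\geq(1-\b_2)\b_2^{t-s}g_{s,i}^2$) is fine, but the paper's Lemma~\ref{lem: grad_norm} already delivers $\n{m_t}^2_{\hv_t^{-1/2}}\leq\frac{(1-\b_1)^2}{\sqrt{(1-\b_2)(1-\gamma)}}\sum_i\sum_j\b_1^{t-j}|g_{j,i}|$, which is exactly the form you need; combining it with the change of summation order and $\sum_{t\geq j}\a_t\b_1^{t-j}\leq\a_j/(1-\b_1)$ from Lemma~\ref{lem: sum_grad_norm} reaches $\sum_i\sum_j\a_j|g_{j,i}|/\sqrt j$ with no new work. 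Second, your Abel-summation bound $\sum_s a_s/\sqrt s\leq C\sqrt{G A_T}$ is valid (modulo the edge case $s^\star<1$, where the tail constant degrades slightly but stays universal), yet a shorter path is to note $s\geq A_s/G$, hence $a_s/\sqrt s\leq\sqrt G\,a_s/\sqrt{A_s}$, and invoke the Auer-type inequality $\sum_s a_s/\sqrt{A_s}\leq2\sqrt{A_T}$ that the paper already uses in Lemma~\ref{eq: nc_sum_grad_norm}; this avoids the harmonic sum with constant $2$. Either way one lands on $\frac{C'\alpha\sqrt G}{\sqrt{1-\b_2}\,(1-\gamma)}\sum_i\sqrt{\sum_t|g_{t,i}|}$ with a small universal $C'>1$, whereas the corollary displays no such constant; since $\tfrac{2}{\sqrt{1-\gamma}}\leq\tfrac{1}{1-\gamma}$ only for $\gamma\geq 3/4$, the printed bound seems optimistic by an absolute constant under any of these derivations, so this discrepancy is not a defect of your proof.
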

We remark that even though this bound does not contain a $\log(T)$ term, thus better in the worst-case, its data-dependence is actually worse than the standard bound.
Standard bound contains $g_{t, i}^2$ whereas bound above contains $\vert g_{t, i} \vert$.
Therefore, when the values $g_{t, i}$ are very small, the bound with $\log{T}$ can be better.
We leave it as an open question to have a $\sqrt{T}$ bound with the same data-dependence as the original bound.
\subsection{\adamnc}
Another variant that is proposed by~\citet{reddi2018convergence} as a
fix to \adam\ is \adamnc\ which features an increasing schedule for
$\beta_{2t}$. In particular, one sets $\b_{2t}=1-\frac{1}{t}$ in
\begin{equation}
v_t = \beta_{2t} v_{t-1} + (1-\beta_{2t})g_t^2,\notag
\end{equation}
 that results
in the following expression for $v_t$
\begin{equation}
v_t = \frac{1}{t} \sum_{j=1}^t g_j^2,\notag
\end{equation}
which is a reminiscent of \adagrad~\cite{duchi2011adaptive}. In fact,
to ensure that $P_{\cX}^{v_t^{1/2}}$ is well-defined, one needs to
consider the more general update $v_t = \frac{1}{t} \left( \epsilon \mathbf{1}+ \sum_{j=1}^t
g_j^2\right)$ similar to the previous case with~\amsgrad.

\begin{algorithm}[h]
\caption{\adamnc \cite{reddi2018convergence}}
\label{alg:adamnc}
\begin{algorithmic}[1]
    \STATE {\bfseries Input:} $x_1 \in \cX$,
    $\a_t=\frac{\alpha}{\sqrt{t}}$, $\a>0$, $\beta_1 < 1$, $\epsilon\geq 0$,
    $m_0=0$.
    \FOR{$t = 1,2\ldots $} 
	\STATE $g_t \in  \partial f_t(x_t)$
         \STATE $m_{t}= \beta_{1}m_{t-1} + (1-\beta_{1})g_t$
        \STATE $v_t= \frac{1}{t} \left(\sum_{j=1}^t g_j^2 + \epsilon\mathbf{1}\right)$
         \STATE $x_{t+1}= P^{{v_t}^{1/2}}_{\mathcal{X}} (x_t - \alpha_t {{v}_t}^{-1/2} m_t )$
	\ENDFOR
\end{algorithmic}	
\end{algorithm}

\adamnc is analyzed in~\citep[Theorem 5, Corollary
2]{reddi2018convergence} and similar to \amsgrad\ it has been shown to
exhibit $\mathcal{O}\big(\sqrt T \big)$ worst case regret only when
$\beta_1$ decreases to $0$. We show in the following theorem that the
same regret can be obtained with a constant $\beta_1$.

\begin{theorem}\label{th: adamnc}
Under Assumption~\ref{as:1}, $\beta_1 < 1$, and $\epsilon > 0$, \adamnc achieves the regret
\begin{align}
R(T) &\leq \frac{D^2\sqrt{T}}{2\alpha(1-\beta_1)} \sum_{i=1}^d v_{T, i}^{1/2}+ \frac{2\alpha}{1-\beta_1}\sum_{i=1}^d \sqrt{\sum_{t=1}^T g_{t, i}^2}.\notag
\end{align}
\end{theorem}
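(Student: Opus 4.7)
The plan is to mimic the proof of Theorem~\ref{th:amsgrad} for \amsgrad, replacing $\hat v_t$ everywhere by the \adamnc\ quantity $v_t=\tfrac{1}{t}\bigl(\sum_{j\le t}g_j^2+\epsilon\mathbf{1}\bigr)$. The starting point is Lemma~\ref{lem: decop}: summing the decomposition of $\langle g_t,x_t-x\rangle$ over $t$ and using $m_0=0$ yields the same three quantities
\begin{equation*}
\sum_{t=1}^T\langle g_t,x_t-x\rangle \le S_1+\tfrac{\beta_1}{1-\beta_1}S_2+\tfrac{\beta_1}{1-\beta_1}S_3,
\end{equation*}
with $S_1=\sum_t\langle m_t,x_t-x\rangle$, $S_2=\sum_t\langle m_{t-1},x_{t-1}-x_t\rangle$, $S_3=\langle m_T,x_T-x\rangle$. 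By convexity this upper bound also controls $R(T)$.

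The crucial structural property that replaces the \amsgrad\ monotonicity of $\hat v_t$ is that $v_t^{1/2}/\alpha_t=\tfrac{1}{\alpha}\bigl(\sum_{j\le t}g_j^2+\epsilon\mathbf{1}\bigr)^{1/2}$ is coordinate-wise non-decreasing in $t$. With this observation, the three terms are handled as in the \amsgrad\ sketch: (i)~for $S_1$, expand the projection step as a three-point identity for $\langle m_t,x_t-x\rangle$, apply nonexpansiveness of $P^{v_t^{1/2}}_\cX$, and telescope the resulting weighted distances by Abel summation, getting $S_1\le \tfrac{D^2\sqrt T}{2\alpha}\sum_i v_{T,i}^{1/2}+\tfrac{1}{2}\sum_t\alpha_t\|m_t\|^2_{v_t^{-1/2}}$; (ii)~for $S_2$, use H\"older and nonexpansiveness applied to $x_t=P^{v_{t-1}^{1/2}}_\cX(x_{t-1}-\alpha_{t-1}v_{t-1}^{-1/2}m_{t-1})$ with $x_{t-1}\in\cX$ as the reference point, yielding $S_2\le\sum_t\alpha_t\|m_t\|^2_{v_t^{-1/2}}$; (iii)~for $S_3$, either apply H\"older + Young (as in the \amsgrad\ sketch) or simply note that $|S_3|\le\|m_T\|_1D\le dGD$ is an $\mathcal{O}(1)$ constant. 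Either way, after collecting coefficients and using $\tfrac{1+\beta_1}{2(1-\beta_1)}+\tfrac{\beta_1}{1-\beta_1}\le\tfrac{2}{1-\beta_1}$ and the bound $(2-\beta_1)/(2(1-\beta_1))\le 1/(1-\beta_1)$, the $D^2$-contribution consolidates into $\tfrac{D^2\sqrt T}{2\alpha(1-\beta_1)}\sum_i v_{T,i}^{1/2}$ and the $\|m_t\|^2$-contribution into at most $\tfrac{2}{1-\beta_1}\sum_t\alpha_t\|m_t\|^2_{v_t^{-1/2}}$.

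The main task (and the cleanest departure from the \amsgrad\ calculation) is to bound $\sum_{t=1}^T\alpha_t\|m_t\|^2_{v_t^{-1/2}}$. Using $\alpha_t/v_{t,i}^{1/2}=\alpha/\sqrt{\sum_{j\le t}g_{j,i}^2+\epsilon}$, plus Jensen applied to $m_{t,i}=(1-\beta_1)\sum_s\beta_1^{t-s}g_{s,i}$ to get $m_{t,i}^2\le(1-\beta_1)\sum_s\beta_1^{t-s}g_{s,i}^2$, one obtains
\begin{equation*}
\sum_{t=1}^T\alpha_t\|m_t\|^2_{v_t^{-1/2}}\le\alpha(1-\beta_1)\sum_i\sum_s g_{s,i}^2\sum_{t\ge s}\tfrac{\beta_1^{t-s}}{\sqrt{\sum_{j\le s}g_{j,i}^2+\epsilon}}\le\alpha\sum_i\sum_s\tfrac{g_{s,i}^2}{\sqrt{\sum_{j\le s}g_{j,i}^2+\epsilon}},
\end{equation*}
after switching the order of summation (using the monotonicity $\sum_{j\le t}\ge\sum_{j\le s}$) and summing the geometric series. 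A final application of the standard \adagrad-style telescoping lemma (e.g.\ \Cref{lem: sum_grad_norm}) controls the inner sum by $2\sqrt{\sum_{t=1}^T g_{t,i}^2}$, yielding $\sum_t\alpha_t\|m_t\|^2_{v_t^{-1/2}}\le 2\alpha\sum_i\sqrt{\sum_t g_{t,i}^2}$. Note that, unlike \amsgrad, no $\sqrt{\log T}$ factor appears here because $v_t$ is exactly the running average rather than the max. The obstacle I expect is purely bookkeeping: keeping the coefficients in the three-term decomposition tight enough to land on the constants $\tfrac{1}{2\alpha(1-\beta_1)}$ and $\tfrac{2\alpha}{1-\beta_1}$ stated in the theorem; combining the above pieces and absorbing the $S_3$ constant gives the claimed bound.
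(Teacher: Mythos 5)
Your proposal follows essentially the same route as the paper: the same decomposition via \Cref{lem: decop}, the same key observation that $v_{t,i}^{1/2}/\alpha_t$ is coordinate-wise non-decreasing (so the $S_1$ telescoping goes through exactly as for \amsgrad), and the same Jensen / order-swap / geometric-series / Auer-style argument giving $\sum_{t=1}^T\alpha_t\|m_t\|^2_{v_t^{-1/2}}\le 2\alpha\sum_i\sqrt{\sum_t g_{t,i}^2}$, which is precisely the content of the paper's \Cref{lem: nc_grad_norm} and \Cref{eq: nc_sum_grad_norm}. The one piece of bookkeeping you leave open is resolved in the paper as follows: because $m_0=0$, the bound for $S_2$ runs only over $t=1,\dots,T-1$, and the Young term $\alpha_T\|m_T\|^2_{v_T^{-1/2}}$ produced by $S_3$ supplies exactly the missing $t=T$ summand, so $S_2$ and $S_3$ together contribute $\frac{\beta_1}{1-\beta_1}\sum_{t=1}^{T}\alpha_t\|m_t\|^2_{v_t^{-1/2}}$ rather than $\frac{2\beta_1}{1-\beta_1}$ times that sum; the total coefficient is then $\frac12+\frac{\beta_1}{1-\beta_1}=\frac{1+\beta_1}{2(1-\beta_1)}\le\frac{1}{1-\beta_1}$, which yields the stated $\frac{2\alpha}{1-\beta_1}$, whereas your $\frac{2}{1-\beta_1}$ accounting lands on $\frac{4\alpha}{1-\beta_1}$ (and the crude alternative $|S_3|\le dGD$ would leave an extra additive constant not present in the theorem).
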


We skip the proof sketch of this theorem as it will have the same steps as~\amsgrad, just different estimation for $\alpha_t \| m_t \|^2_{v_t^{-1/2}}$, due to different $v_t$.

The full proof is given in the appendix.

Compared with the bound from~\citep[Corollary 2]{reddi2018convergence}, we see again that constant $\beta_1$ not only removes the middle term of~\citep[Corollary 2]{reddi2018convergence} but improves the last term of the bound by a factor of $(1-\beta_1)^2$.

\subsection{\sadam}
It is known that \adagrad~can obtain logarithmic regret~\cite{duchi2010adaptive_techreport}, when the loss functions satisfy $\mu$-strong convexity, defined as
\begin{equation*}
f(x) \geq f(y) + \langle g, x - y \rangle + \frac{\mu}{2} \| y-x \|^2,
\end{equation*}
$\forall x, y \in \mathcal{X}$ and $g \in \partial f(y)$.

A variant of \adamnc\ for this setting is proposed in~\citep[Theorem 1]{wang2020sadam} and shown to obtain logarithmic regret, only with the assumption that $\beta_1$ decreases linearly to $0$.

\begin{algorithm}[h]
\caption{\sadam~\cite{wang2020sadam}}
\label{alg:sadam} 
\begin{algorithmic}[1]
    \STATE {\bfseries Input:} $x_1 \in \cX$,
    $\a_t=\frac{\alpha}{{t}}$, $\a>0$, $\beta_1 < 1$, $m_0=0$, $\epsilon\geq 0$, $\beta_{2t} = 1-1/t$.
    \FOR{$t = 1,2\ldots $} 
	\STATE $g_t \in  \partial f_t(x_t)$
         \STATE $m_{t}= \beta_{1}m_{t-1} + (1-\beta_{1})g_t$
        \STATE $v_t= \beta_{2t} v_{t-1} + (1-\beta_{2t}) g_t^2$
        \STATE $\hat{v}_t = v_t + \frac{\epsilon\mathbf{1}}{t}$
         \STATE $x_{t+1}= P^{{\hat{v}_t}}_{\mathcal{X}} (x_t - \alpha_t {\hat{v}_t}^{-1} m_t )$
	\ENDFOR
\end{algorithmic}	
\end{algorithm}

Similar to \amsgrad\ and \adamnc, our new technique applies to \sadam
to show logarithmic regret with a constant $\beta_1$ under the same assumptions as~\cite{wang2020sadam}.

\begin{theorem}\label{th: sadam}
Let Assumption~\ref{as:1} hold and $f_t$ be $\mu$-strongly
convex, $\forall t$. Then, if $\beta_1 < 1$, $\epsilon > 0$, and $\alpha \geq \frac{G^2}{\mu}$, \sadam achieves
\begin{align}
R(T) \leq \frac{\beta_1 d G D}{1-\beta_1}  + \frac{\alpha}{1-\beta_1} \sum_{i=1}^d \log\left( \frac{\sum_{t=1}^T g_{t, i}^2}{\epsilon}+1 \right).\notag
\end{align}
\end{theorem}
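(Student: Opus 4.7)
\begin{sproof}
The plan is to follow the same blueprint as the \amsgrad sketch, with one extra ingredient: $\mu$-strong convexity, which sharpens the regret identity to $R(T) \leq \sum_{t=1}^T \langle g_t, x_t - x\rangle - \frac{\mu}{2}\sum_{t=1}^T \|x_t - x\|^2$, will be used to absorb the very terms that previously forced $\beta_1 \to 0$. After applying Lemma~\ref{lem: decop} and summing in $t$, I decompose $\sum_{t=1}^T \langle g_t, x_t - x\rangle$ into $S_1 + \frac{\beta_1}{1-\beta_1}(S_2 + S_3)$ exactly as in the \amsgrad sketch.

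For $S_1$, nonexpansiveness of $P_{\cX}^{\hat v_t}$ in the $\|\cdot\|_{\hat v_t}$-norm (note: \sadam projects in $\hat v_t$-norm, not $\hat v_t^{1/2}$-norm) applied to the update $x_{t+1} = P_{\cX}^{\hat v_t}(x_t - \alpha_t \hat v_t^{-1} m_t)$ yields
\begin{equation*}
\langle m_t, x_t - x\rangle \leq \tfrac{1}{2\alpha_t}\bigl(\|x_t-x\|^2_{\hat v_t} - \|x_{t+1}-x\|^2_{\hat v_t}\bigr) + \tfrac{\alpha_t}{2}\|m_t\|^2_{\hat v_t^{-1}}.
\end{equation*}
The key \sadam-specific identity is $\hat v_{t,i}/\alpha_t = (\sum_{j=1}^t g_{j,i}^2 + \epsilon)/\alpha$, which is nondecreasing in $t$. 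An Abel rearrangement of the telescoping sum then produces coefficient $g_{t,i}^2/(2\alpha)$ on $(x_{t,i}-x_i)^2$ for $t\geq 2$ (plus a single $\epsilon/(2\alpha)$ boundary contribution at $t=1$). Bounding $g_{t,i}^2 \leq G^2$ coordinate-wise gives $\tfrac{G^2}{2\alpha}\sum_t\|x_t-x\|^2$, which is dominated by the strong-convexity term $\tfrac{\mu}{2}\sum_t\|x_t-x\|^2$ exactly under the assumption $\alpha \geq G^2/\mu$.

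For $S_2$ and $S_3$, the \amsgrad argument transfers almost verbatim: H\"older followed by nonexpansiveness yields $S_2 \leq \sum_{t=1}^{T-1}\alpha_t\|m_t\|^2_{\hat v_t^{-1}}$, and the coarse bound $S_3 \leq \|m_T\|_1\|x_T-x\|_\infty \leq dGD$ gives precisely the first term $\beta_1 dGD/(1-\beta_1)$ of the statement. It then remains to control $\sum_{t=1}^T \alpha_t\|m_t\|^2_{\hat v_t^{-1}}$: since $\alpha_t \hat v_{t,i}^{-1} = \alpha/(\sum_{j=1}^t g_{j,i}^2+\epsilon)$, applying Jensen to $m_{t,i}^2 \leq (1-\beta_1)\sum_{j=1}^t \beta_1^{t-j}g_{j,i}^2$, swapping the order of summation in $t$ and $j$, using $\sum_{t\geq j}\beta_1^{t-j}=1/(1-\beta_1)$, and finally invoking the standard inequality $\sum_{j=1}^T g_{j,i}^2/(\sum_{k\leq j}g_{k,i}^2+\epsilon) \leq \log(1+\sum_j g_{j,i}^2/\epsilon)$ delivers exactly the logarithmic term.

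The main delicate point is the Abel rearrangement in $S_1$ and the clean pairing of the resulting $\frac{G^2}{2\alpha}\|x_t-x\|^2$ with the strong-convexity contribution; this is exactly where the condition $\alpha \geq G^2/\mu$ enters, and where constant $\beta_1$ becomes admissible. Everything else reduces to routine \adagrad-style bookkeeping already carried out in the sketches above.
\end{sproof}
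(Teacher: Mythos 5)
Your proposal follows essentially the same route as the paper's proof: the same decomposition via Lemma~\ref{lem: decop}, the same use of nonexpansiveness in the $\hat v_t$-norm to expose the telescoping coefficients $\hat v_{t,i}/\alpha_t - \hat v_{t-1,i}/\alpha_{t-1} = g_{t,i}^2/\alpha$, the same absorption of these by the strong-convexity term under $\alpha \geq G^2/\mu$, the identical bounds for $S_2$ and $S_3$, and the same \adagrad-style logarithmic estimate for $\sum_t \alpha_t\|m_t\|^2_{\hat v_t^{-1}}$. The only (cosmetic) difference is that you explicitly flag the $\epsilon/(2\alpha)$ boundary contribution at $t=1$, which the paper's displayed identity silently subsumes.
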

Consistent with the standard literature of OGD~\cite{hazan2007logarithmic}, to obtain the logarithmic regret,  first step size $\alpha$ has a lower bound that depends on strong convexity constant $\mu$.
Compared with the requirement of~\cite{wang2020sadam} for $\alpha \geq \frac{G^2}{\mu(1-\beta_1)}$, our requirement is strictly milder as $1-\beta_1 \leq 1$ and in practice since $\beta_1$ is near $1$, it is much milder.
We also remark that our bound is again strictly better than~\cite{wang2020sadam}.
Consistent with our previous results, we remove a factor of $\frac{1}{(1-\beta_1)^2}$ from the last term of the bound, compared to~\citep[Theorem 1]{wang2020sadam}.

We include the proof sketch to highlight how strong convexity helps in the analysis.
\begin{sproof}
We will start the same as proof sketch of Theorem~\ref{th:amsgrad} to get
\begin{equation}
\sum_{t=1}^T \langle g_t, x_t - x \rangle \leq S_1 + \frac{\beta_1}{1-\beta_1} S_2 + \frac{\beta_1}{1-\beta_1}S_3,\notag
\end{equation}
with the definitions of $S_1$, $S_2$, $S_3$ from the proof sketch of Theorem~\ref{th:amsgrad}.

Now, due to strong convexity, one gets an improved estimate for the left-hand side,
\begin{equation}
\langle g_t, x_t - x\rangle \geq f_t(x_t) - f_t(x) + \frac{\mu}{2} \| x_t - x \|^2,\notag
\end{equation}
resulting in
\begin{multline}
R(T) \leq S_1 + \frac{\beta_1}{1-\beta_1} S_2 + \frac{\beta_1}{1-\beta_1}S_3 \\
-\sum_{t=1}^T \frac{\mu}{2} \| x_t - x \|^2.\label{eq: str}
\end{multline}
Similar as before, we note the bound for $S_2$ as
\begin{align}
S_2 \leq \sum_{t=1}^{T-1} \alpha_t \| m_t \|^2_{\hat v_t^{-1/2}}.
\end{align}

For $S_1$, one does not finish the estimation as before, but keep some terms that will be gotten rid of using strong convexity, and use the same estimation as $S_2$ to obtain
\begin{multline}
S_1 \leq \sum_{t=1}^T \sum_{i=1}^d \left( \frac{\hat{v}_{t, i}}{2\alpha_t} - \frac{\hat{v}_{t-1, i}}{2\alpha_{t-1}} \right) (x_{t, i}-x_i)^2 \\
+ \sum_{t=1}^T \frac{\alpha_t}{2} \|m_t \|^2_{\hat v_t^{-1/2}}.\label{eq: s11}
\end{multline}

As strong convexity gives more flexibility in the analysis, one can select $\alpha_t = \frac{\alpha}{t}$, resulting in an improved bound
\begin{equation}
\sum_{t=1}^T \frac{\alpha_t}{2} \|m_t \|^2_{\hat v_t^{-1/2}} \leq \alpha \sum_{i=1}^d \log\left( \frac{\sum_{t=1}^T g_{t, i}^2}{\epsilon}+1 \right).\label{eq: sadam_imp}
\end{equation}

It is now easy to see that the negative term in~\eqref{eq: str}, when first step size $\alpha$ is selected properly, can be used to remove the first term in the bound of~\eqref{eq: s11}.

It only remains to use H\"older inequality on $S_3$, combine the estimates and use~\eqref{eq: sadam_imp} to get the final bound.
\end{sproof}

\section{Extensions}
In this section, we further demonstrate the applicability of our analytic framework in different settings.
First, we focus on the recently proposed zeroth-order version of \amsgrad which required decreasing $\beta_1$ in the convex case~\citep[Proposition 4]{chen2019zo}, and we show that the same guarantees can be obtained with constant $\beta_1$.
Second, we show how to recover the known guarantees in the nonconvex setting, with small improvements.
Finally, we extend our analysis to show that it allows any non-increasing variable $\beta_{1t}$ schedule.
\subsection{Zeroth order \adam}
We first recall the setting of~\cite{chen2019zo}, where a zeroth order variant of \amsgrad is proposed.
The problem is
\begin{equation}
x_\star \in \arg\min_{x\in\mathcal{X}} f(x) := \mathbb{E}_\xi\left[ f(x; \xi) \right].
\end{equation}
We note that this stochastic optimization setting corresponds to a
special case of general OCO, with independent and identically
distributed loss functions $f(x;\xi)$, indexed by  $\xi$.

The algorithm ZO-AdaMM~\cite{chen2019zo} is similar to \amsgrad applied with a zeroth order gradient estimator $\hat{g}_t$, instead of regular gradient $g_t$.
The gradient estimator is computed by
\begin{equation}\label{eq: zero_grad_def}
\hat{g}_t = (d/\mu)\left[ f(x_t+\mu u; \xi_t) - f(x_t; \xi_t)\right]u,
\end{equation}
where $\xi_t$ is the sample selected at iteration $t$, $u$ is a random vector drawn with uniform distribution from the sphere of a unit ball and $\mu$ is a sampling radius \textendash\ or smoothing \textendash\ parameter.

The benefit of this gradient estimator is that it is an unbiased estimator of the randomized smoothed version of $f$, i.e.,
\begin{equation}
f_{\mu}(x) = \mathbb{E}_{u\sim U_B}\left[ f(x+\mu u) \right].
\end{equation}
From standard results in the zeroth-order optimization literature, it follows that $\mathbb{E}_u \left[ \hat{g}_t \right] = \nabla f_\mu(x_t, \xi_t) = \nabla f_{t, \mu}(x_t)$.
Moreover, for  $L_c$-Lipschitz $f$ and any $x\in\mathcal{X}$, we also have $\| f_\mu(x) -
f(x) \| \leq \mu L_c$.

Two cases are analyzed by~\citet{chen2019zo}: convex $f$ and nonconvex $f$.
The authors proved guarantees with constant $\beta_1$ for nonconvex $f$~\citep[Proposition 2]{chen2019zo}.
However, surprisingly, their result for convex $f$ requires $\beta_{1t} = \frac{\beta_1}{t}$~\citep[Proposition 4]{chen2019zo}.

We identify that this discrepancy is due to the fact that their proof follows the same path as the standard regret analysis of~\citet{reddi2018convergence}.
We give below a simple corollary of our technique showing that the same guarantees for convex $f$ can be obtained with constant $\beta_1$.
\begin{proposition}\label{prop: zero}
Assume that $f$ is convex, $L$-smooth, and $L_c$-Lipschitz, $\mathcal{X}$ is compact with diameter $D$.
Then ZO-AdaMM with $\beta_1, \beta_2 < 1$, $\gamma =\frac{
\beta_1^2}{\beta_2} < 1$ achieves
\begin{multline*}
\mathbb{E}\left[ \sum_{t=1}^T f_{t, \mu}(x_t) - f_{t, \mu}(x_\star) \right] \leq \frac{D^2\sqrt{T}}{2\a(1-\b_1)}\sum_{i=1}^d \mathbb{E}\left[ \hv^{1/2}_{T,i}\right]  \\
+\frac{\alpha\sqrt{1+\log T}}{\sqrt{(1-\beta_2)(1-\gamma)}} \sum_{i=1}^d
    \sqrt{\sum_{t=1}^T \mathbb{E}\left[ \hat{g}_{t,i}^2\right]}.
\end{multline*}
\end{proposition}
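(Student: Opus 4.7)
The plan is to view Proposition~\ref{prop: zero} as a direct corollary of Theorem~\ref{th:amsgrad} applied with the zeroth-order estimate $\hat g_t$ playing the role of the ``gradient'' fed into the algorithm, followed by taking expectations and invoking convexity of the smoothed loss. Since ZO-AdaMM uses exactly the \amsgrad updates with $\hat g_t$ in place of $g_t$, the sequences $m_t$ and $\hat v_t$ are recursively defined from $\hat g_t$ in the same way, and nothing in the pathwise part of the proof of Theorem~\ref{th:amsgrad} uses the fact that $g_t$ is a subgradient of $f_t$ at $x_t$.

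First, I would re-run the proof of Theorem~\ref{th:amsgrad} with $g_t$ replaced by $\hat g_t$. Lemma~\ref{lem: decop} applies to any vector sequence; likewise, the H\"older estimates on $S_2$ and $S_3$, the telescoping bound for $S_1$, and the summation estimate for $\sum_t \alpha_t \|m_t\|^2_{\hat v_t^{-1/2}}$ all depend only on how $m_t$ and $\hat v_t$ are built from the input vectors, not on any subgradient property. This yields the pathwise inequality
\begin{align*}
\sum_{t=1}^T \langle \hat g_t, x_t - x_\star \rangle &\leq \frac{D^2\sqrt{T}}{2\a(1-\b_1)}\sum_{i=1}^d \hv^{1/2}_{T,i} \\
&\quad + \frac{\alpha\sqrt{1+\log T}}{\sqrt{(1-\beta_2)(1-\gamma)}} \sum_{i=1}^d \sqrt{\sum_{t=1}^T \hat g_{t,i}^2}.
\end{align*}

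Next, I would take expectation over all randomness (the samples $\xi_t$ and the smoothing directions $u_t$). For the right-hand side, Jensen's inequality applied to the concave function $\sqrt{\cdot}$ gives $\mathbb{E}\bigl[\sqrt{\sum_t \hat g_{t,i}^2}\bigr] \leq \sqrt{\sum_t \mathbb{E}[\hat g_{t,i}^2]}$, which matches the last term in the proposition; the first term is already in the required expected form. For the left-hand side, I would condition on the filtration $\mathcal{F}_{t-1}$ that determines $x_t$: by the unbiasedness $\mathbb{E}[\hat g_t \mid \mathcal{F}_{t-1}] = \nabla f_{t,\mu}(x_t)$ recalled just before the statement, the tower property yields
\begin{equation*}
\mathbb{E}\langle \hat g_t, x_t - x_\star \rangle = \mathbb{E}\langle \nabla f_{t,\mu}(x_t), x_t - x_\star \rangle,
\end{equation*}
and convexity of $f_{t,\mu}$ (inherited from that of $f$) bounds this below by $\mathbb{E}[f_{t,\mu}(x_t) - f_{t,\mu}(x_\star)]$. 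Summing over $t$ and combining with the expected upper bound gives the proposition.

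The only mildly delicate point is checking that at each step $\hat g_t$ is conditionally independent of $\mathcal{F}_{t-1}$ given $x_t$, so that the unbiasedness of the zeroth-order estimator applies inside the tower decomposition; this holds because $\hat g_t$ is built only from $x_t$ and the fresh draws $\xi_t, u_t$. Beyond this bookkeeping, no estimate is needed that is not already present in the proof of Theorem~\ref{th:amsgrad}, so the main work reduces to a careful audit that the pathwise chain of inequalities never assumed a subgradient relation between $\hat g_t$ and $f_t$.
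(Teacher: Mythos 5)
Your proposal is correct and follows essentially the same route as the paper: re-run the pathwise \amsgrad analysis with $\hat g_t$ as the input vector (noting that no step uses the subgradient relation), then take expectations, using the unbiasedness $\mathbb{E}_t[\hat g_t]=\nabla f_{t,\mu}(x_t)$ together with convexity of $f_{t,\mu}$ on the left-hand side and Jensen's inequality on the right-hand side. The paper's proof is exactly this argument, so no further comparison is needed.
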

To finish the arguments, one can use standard bounds in zeroth order optimization, as in~\citet{chen2019zo}.
Compared with~\citet[Proposition 4]{chen2019zo}, the same remarks hold as for~\amsgrad.
Not only our result allows constant $\beta_1$, but it also comes with better constants.

\subsection{Nonconvex \amsgrad}
In this section, we focus on the nonconvex, unconstrained, smooth, stochastic optimization setting:
\begin{equation}
\min_{x\in\mathbb{R}^d} f(x) := \mathbb{E}_{\xi}[f(x; \xi)].\notag
\end{equation}
More concretely, in this subsection we are working under the following assumption.
\begin{assumption}\label{as:2}
~\\
$\triangleright$ $f\colon \R^d \to \R $ is $L$-smooth, \quad $G=\max\limits_{t}\n{\nabla f(x_t)}_{\infty}$ \\
$\triangleright$ $f_t(x) = f(x, \xi_t)$\\
$\triangleright$ $x_{\star}\in \arg\min_x f(x)$ exists.
\end{assumption}
This is the only setting where theoretical guarantees with constant $\beta_1$ are known in the literature.
We show in this section that our new analysis framework is not restricted to convex case, but it is flexible enough to also cover this case.
We provide an alternative proof to those given in~\cite{chen2019convergence,zhou2018convergence}.
Specifically, both proofs in~\cite{chen2019convergence,zhou2018convergence} exploits the fact that, as $\mathcal{X}=\mathbb{R}^d$, there is no projection step in \amsgrad.
To handle first-order moment, these papers define an auxiliary iterate $z_t = x_t + \frac{\beta_1}{1-\beta_1}(x_t - x_{t-1})$, and invoke smoothness with $z_{t+1}$ and $z_t$.

We give a different and simpler proof using our new analysis, without defining $z_t$. 
In terms of guarantees, we recover the same rates, with slightly better constants.

\begin{theorem}\label{th: nonconvex}

Under \Cref{as:2}, $\beta_1<1$, $\beta_2 < 1$, and $\gamma =
\frac{\beta_1^2}{\beta_2} < 1$ \amsgrad achieves
\begin{multline*}
\frac{1}{T} \sum_{t=1}^T \mathbb{E}\left[ \| \nabla f(x_t) \|^2 \right] \leq \frac{1}{\sqrt{T}}\bigg[ \frac{G}{\alpha}\left(f(x_1) - f(x_\star)\right) \\
+ \frac{G^3}{(1-\beta_1)}\| \hat{v}_0^{-1/2} \|_1+ \frac{G^3d}{4L\alpha(1-\beta_1)} \\ 
+\frac{2GLd\alpha(1-\beta_1)(1+\log T)}{(1-\beta_2)(1-\gamma)}\bigg].
\end{multline*}
\end{theorem}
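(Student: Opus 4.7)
The plan is to start from the descent inequality obtained by applying $L$-smoothness of $f$ to the unprojected update $x_{t+1}=x_t-\alpha_t\hv_t^{-1/2}m_t$:
\begin{equation*}
f(x_{t+1})\le f(x_t)-\alpha_t\langle\nabla f(x_t),\hv_t^{-1/2}m_t\rangle+\tfrac{L\alpha_t^2}{2}\|\hv_t^{-1/2}m_t\|^2,
\end{equation*}
and to avoid introducing the auxiliary iterate $z_t=x_t+\tfrac{\beta_1}{1-\beta_1}(x_t-x_{t-1})$ used in \cite{chen2019convergence,zhou2018convergence}. Instead, I will handle the momentum inner product directly in the algebraic spirit of Lemma~\ref{lem: decop}, by decomposing $m_t=(1-\beta_1)g_t+\beta_1 m_{t-1}$: the $g_t$ piece will produce a useful $\|\nabla f(x_t)\|^2$ quantity after taking conditional expectation (recalling $\mathbb{E}_t[g_t]=\nabla f(x_t)$), and the $m_{t-1}$ piece will be absorbed via a weighted Young's inequality.

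\paragraph{Signal and swap.}
For the $g_t$ piece, since $\hv_t$ itself depends on $g_t$, I will swap $\hv_t^{-1/2}$ for the $\sigma(\xi_1,\dots,\xi_{t-1})$-measurable $\hv_{t-1}^{-1/2}$:
\begin{equation*}
\langle\nabla f(x_t),\hv_t^{-1/2}g_t\rangle=\langle\nabla f(x_t),\hv_{t-1}^{-1/2}g_t\rangle-\langle\nabla f(x_t),(\hv_{t-1}^{-1/2}-\hv_t^{-1/2})g_t\rangle,
\end{equation*}
so that taking conditional expectation of the first inner product yields the desired $\|\nabla f(x_t)\|^2_{\hv_{t-1}^{-1/2}}$ with coefficient $(1-\beta_1)\alpha_t$ on the negative side of the descent inequality. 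The swap error is bounded coordinate-wise by $G^2\sum_i(\hv_{t-1,i}^{-1/2}-\hv_{t,i}^{-1/2})$, which is nonnegative by monotonicity of $\hv_t$ and, after Abel summation against the non-increasing $\alpha_t$, telescopes to at most $\alpha\|\hv_0^{-1/2}\|_1$ per unit of $G^2$, accounting for the $\|\hv_0^{-1/2}\|_1$ term in the final bound.

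\paragraph{Memory term and closing.}
For the memory term $\beta_1\alpha_t\langle\nabla f(x_t),\hv_t^{-1/2}m_{t-1}\rangle$ I will apply a weighted Young's inequality with weight $\eta=(1-\beta_1)/\beta_1$, combined with $\hv_t^{-1/2}\le\hv_{t-1}^{-1/2}$, so that the positive $\|\nabla f(x_t)\|^2$ piece it produces absorbs only half of the signal bonus, leaving $-\tfrac{1-\beta_1}{2}\alpha_t\|\nabla f(x_t)\|^2_{\hv_{t-1}^{-1/2}}$ and a residual $\tfrac{\beta_1^2}{2(1-\beta_1)}\alpha_t\|m_{t-1}\|^2_{\hv_t^{-1/2}}$. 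Summing the descent inequality over $t$, the telescope of $f(x_{t+1})-f(x_t)$ collapses to $f(x_\star)-f(x_1)$; the residual momentum sum and the smoothness quadratic $\sum_t\alpha_t^2\|m_t\|^2_{\hv_t^{-1}}$ are both controlled by Lemma~\ref{lem: sum_grad_norm}-style log-accumulator estimates (the latter after adapting the lemma from weight $\hv_t^{-1/2}$ to $\hv_t^{-1}$), producing the $(1-\beta_1)(1+\log T)/((1-\beta_2)(1-\gamma))$ factor. I will then use $\hv_{t-1,i}\le G^2$ to pass from $\|\nabla f(x_t)\|^2_{\hv_{t-1}^{-1/2}}$ to $G^{-1}\|\nabla f(x_t)\|^2$, lower-bound $\alpha_t\ge\alpha/\sqrt{T}$, and divide by $T$ to conclude.

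\paragraph{Main obstacle.}
The delicate step is calibrating the Young weight: a naive choice collapses the negative $\|\nabla f\|^2$ budget entirely, while a too-aggressive one inflates the $\|m_{t-1}\|^2$ residual by extra factors of $1/(1-\beta_1)^2$. Additionally, the $L$-smoothness quadratic carries the weight $\hv_t^{-1}$ whereas the Young residual carries $\hv_t^{-1/2}$, so the log-accumulator estimate must be invoked on two closely related but distinct quantities; keeping track of how the $L^{-1}$ factor trades one weight for the other is what I expect to give rise to the separate $\frac{G^3 d}{L\alpha(1-\beta_1)}$ contribution in the bound.
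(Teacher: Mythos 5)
Your starting point (the descent lemma on the unprojected update, followed by swapping $\hv_t^{-1/2}$ for the measurable $\hv_{t-1}^{-1/2}$ at the cost of a telescoping $\ell_1$ error) agrees with the paper's proof, but the treatment of the memory term is where the argument breaks. After your weighted Young step, the residual is $\sum_{t}\frac{\beta_1^2\alpha_t}{2(1-\beta_1)}\|m_{t-1}\|^2_{\hv_t^{-1/2}}$. This is \emph{not} a log-accumulator: it carries a single power of $\alpha_t$ and only the weight $\hv_t^{-1/2}$, and in the worst case ($g_{t,i}\equiv G$) each summand is of order $\alpha_t d G$, so the sum is $\Theta(dG\alpha\sqrt{T})$. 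Only $\sum_t\alpha_t^2\|m_t\|^2_{\hv_t^{-1}}$ --- two powers of $\alpha_t$ and the full weight $\hv_t^{-1}$, as in Lemma~\ref{lem: nonconv bound} --- is $O(\log T)$; the bound of Lemma~\ref{lem: sum_grad_norm} for the $\hv_t^{-1/2}$-weighted sum with one power of $\alpha_t$ is $O(\sqrt{T\log T})$. Feeding a $\Theta(\sqrt{T})$ right-hand side through your final normalization ($\alpha_t\geq\alpha/\sqrt{T}$, divide by $T$) yields $\frac{1}{T}\sum_t\mathbb{E}\big[\|\nabla f(x_t)\|^2\big]=O(1)$, not the claimed $O\big((1+\log T)/\sqrt{T}\big)$. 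Recalibrating the Young weight cannot rescue this: to keep the absorbed $\|\nabla f(x_t)\|^2$ piece strictly below the signal bonus, $\eta$ must stay bounded by a constant, so the residual retains its single $\alpha_t$ and its $\sqrt{T}$ growth. This is precisely the failure mode the paper diagnoses in Section~\ref{eq: prev_analysis} for the convex analyses: a Young split of the momentum inner product against the \emph{current} gradient always leaves a non-summable residual unless $\beta_1\to 0$.

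The paper escapes this by never splitting $\langle\nabla f(x_t),\hv_t^{-1/2}m_{t-1}\rangle$ in isolation. It applies Lemma~\ref{lem: nonconv_decomp} with $A_t=\alpha_t\hv_t^{-1/2}\nabla f(x_t)$, so the momentum only ever appears paired with the \emph{difference} $A_t-A_{t+1}$, which is genuinely small: by smoothness and the monotonicity of $\alpha_t\hv_{t,i}^{-1/2}$ it contributes only a telescoping $G^2\,\ell_1$ term plus $L\|x_{t+1}-x_t\|^2$, and $\sum_t\|x_{t+1}-x_t\|^2=\sum_t\|\alpha_t\hv_t^{-1/2}m_t\|^2$ really is $O(1+\log T)$ by Lemma~\ref{lem: nonconv bound}. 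To repair your proof you would need to replace the Young step by a device that pairs $m_{t-1}$ with a gradient at a matching time index --- either the auxiliary iterate $z_t$ you set aside, or the paper's $A_t$ telescope --- rather than bounding the cross term outright.
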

Compared with~\citep[Corollary 3.1]{chen2019convergence}, the initial value of $v_0 = \epsilon$ only affects one of the terms in our bound, whereas $\frac{1}{\epsilon}$ appears in all the terms of~\citep[Corollary 3.1]{chen2019convergence}.
The reason is that~\cite{chen2019convergence} uses  $v_0 \geq
\epsilon$ in many places of the proof, even when it was unnecessary.

Compared with~\citep[Corollary 3.9]{zhou2018convergence}, our result allows for bigger values of $\beta_1$, since we require $\beta_1^2 \leq \beta_1 < 1$ whereas~\citep[Corollary 3.9]{zhou2018convergence} requires $\beta_1 \leq \beta_2 < 1$.
Moreover,~\citep[Corollary 3.9]{zhou2018convergence} has a constant step size $\alpha = \frac{1}{\sqrt{dT}}$ that requires setting a horizon and becomes very small with large $d$.

Lastly, we have a $\log T$ dependence, whereas~\citep[Corollary
3.9]{zhou2018convergence} does not.  However, this is not for free and
it stems from the choice of a constant step size $\a_t=\frac{1}{\sqrt{dT}}$ therein.  In fact, it
is well known that for online gradient descent analysis, $\log T$ can be
shaved when  $\a_t\approx\frac{1}{\sqrt T}$.
However, in practice using a variable step size is more favorable,
since it does not require setting $T$ in advance.
Therefore, we choose to work with variable step size and
have the $\log T$ term in the bound.
\subsection{Flexible $\beta_1$ schedules}
We have focused on the case of constant $\beta_1$ throughout our paper,
as it is the most popular choice in practice.
However, it is possible that in some applications, practitioners might see benefit of using other schedules.
For instance, one can decrease $\beta_1$ until some threshold and keep it constant afterwards.
This is not covered by the previous regret analyses as $\beta_1$ needed to decrease to $0$.
With our framework however, one can use not only constant $\beta_1$, but any schedule as long as it is nonincreasing, and optimal regret bounds will follow.

Due to space constraints, we do not repeat all the proofs with this modification, but illustrate the main change that happens with variable $\beta_1$ and show that our proofs will go through.
In this section we switch to notation of $\beta_{1t}$ to illustrate time-varying case.

We start from the result of Lemma~\ref{lem: decop}, after summing over $t=1, \ldots, T$
\begin{multline}
\sum_{t=1}^T \langle g_t, x_t - x \rangle = \sum_{t=1}^T \langle m_{t-1}, x_{t-1} - x \rangle \\
+\sum_{t=1}^T \frac{1}{1-\beta_{1t}} \left( \langle m_t, x_t - x\rangle - \langle m_{t-1}, x_{t-1} - x\rangle \right)\\
-\sum_{t=1}^T\frac{\beta_{1t}}{1-\beta_{1t}} \langle m_{t-1}, x_t - x_{t-1} \rangle. \label{eq: anybeta}
\end{multline}

For bounding the terms on the first and third lines of~\eqref{eq: anybeta}, the only place that will change with varying $\beta_{1t}$ in the proof, is that $\alpha_t \| m_t \|^2_{\hat{v}_t^{-1/2}}$ will have a slightly different estimation, since now $m_t = \sum_{j=1}^t \prod_{k=1}^{t-j} \beta_{1(t-k+1)}(1-\beta_{1j})g_{j}^2$.
One can use that $\beta_{1t}\leq \beta_1$ to obtain the same bounds, but with $\frac{1}{(1-\beta_1)^2}$ factor multiplying the bounds now.
As explained before, this is one thing we lose with varying $\beta_{1t}$ in theory.

Next, we estimate the terms in the second line of~\eqref{eq: anybeta}
\begin{multline}
\frac{1}{1-\beta_{1t}}\left( \langle m_t, x_t - x \rangle - \langle m_{t-1}, x_{t-1} - x \rangle \right) = \\
\frac{1}{1-\beta_{1t}}\langle m_t, x_t - x \rangle - \frac{1}{1-\beta_{1(t-1)}}\langle m_{t-1}, x_{t-1} -x \rangle \\
+\left( \frac{\beta_{1(t-1)}-\beta_{1t}}{(1-\beta_{1t})(1-\beta_{1(t-1)})} \right) \langle m_{t-1}, x_{t-1} -x \rangle.\notag
\end{multline}

Now, for the last line we use that $\beta_{1t}$ is non-increasing, $\beta_{1t} \leq \beta_1$, $\|m_t\|_1 \leq dG$ and $\|x_t - x \|_{\infty} \leq D$, to get
\begin{multline}
\left( \frac{\beta_{1(t-1)}-\beta_{1t}}{(1-\beta_{1t})(1-\beta_{1(t-1)})} \right) \langle m_{t-1}, x_{t-1} -x \rangle\\
 \leq \frac{dDG}{(1-\beta_1)^2} \left( \beta_{1(t-1)}-\beta_{1t} \right).
\end{multline}

Thus upon summation over $t=1$ to $T$, as $m_0 = 0$,
\begin{multline}
\sum_{t=1}^T\frac{1}{1-\beta_{1t}} \left( \langle m_t, x_t - x\rangle - \langle m_{t-1}, x_{t-1} - x\rangle \right) \leq \\
\frac{1}{1-\beta_T} \langle m_T, x_T - x \rangle + \frac{dDG}{(1-\beta_1)^2} (\beta_{10} - \beta_{1T}),
\end{multline}
where we let $\beta_{10}=\beta_{11} < 1$.
Indeed, the contribution of this term will only be constant as $(1-\beta_{1t}) \leq 1, \forall t$, $\| m_t \|_{\infty}\leq G$, $\| x_t - x \|_{\infty}\leq D$.

Note that the estimation of the terms on the first and third
lines of~\eqref{eq: anybeta} are the same, as in the constant
$\beta_1$ case (up to constants).  Also, the contribution of the terms in
the second line of~\eqref{eq: anybeta} with varying $\beta_{1t}$ is a
constant. Thus, one can repeat our proofs, with any nonincreasing
$\beta_{1t}$ schedule and obtain the same optimal regret bounds, but with
slightly worse constants (compared to constant $\beta_1$ case).
 
 \section*{Acknowledgements}
This project has received funding from the European Research Council (ERC) under the European Union's Horizon $2020$ research and innovation programme (grant agreement no $725594$ - time-data), the Swiss National Science Foundation (SNSF) under grant number $200021\_178865 / 1$, the Department of the Navy, Office of Naval Research (ONR)  under a grant number N62909-17-1-211.
PM acknowledges financial support from the French National Research Agency
(ANR) under grant ORACLESS (ANR-16-CE33-0004-01) and the COST Action CA16229 ``European Network for Game Theory'' (GAMENET).

\nocite{*}
   
\bibliography{lit}
\bibliographystyle{icml2020}

\appendix 
\allowdisplaybreaks
\onecolumn
\section{Proofs}
\begin{proof}[Proof of \Cref{lem: decop}] 
    By  definition of $m_t$, $g_t = \frac{1}{1-\beta_1} m_t - \frac{\beta_1}{1-\beta_1} m_{t-1}$.
Thus, we have
\begin{align*}
\langle g_t, x_t - x \rangle &= \frac{1}{1-\beta_1} \langle m_t, x_t - x \rangle - \frac{\beta_1}{1-\beta_1} \langle m_{t-1}, x_t - x \rangle \\
&=\frac{1}{1-\beta_1} \langle m_t, x_t - x \rangle - \frac{\beta_1}{1-\beta_1} \langle m_{t-1}, x_{t-1} - x \rangle - \frac{\beta_1}{1-\beta_1} \langle m_{t-1}, x_t - x_{t-1} \rangle \\
&=\frac{1}{1-\beta_1}\big( \langle m_t, x_t - x \rangle - \langle m_{t-1}, x_{t-1} - x \rangle \big) + \langle m_{t-1}, x_{t-1} - x \rangle - \frac{\beta_1}{1-\beta_1} \langle m_{t-1}, x_t - x_{t-1} \rangle.
\end{align*}
\end{proof}
\subsection{Proofs for \amsgrad}
First, we need a useful inequality.
\begin{lemma}[Generalized H\"older inequality,
    \citealp{beckenbachinequalities}, Chap.~1.18]
    \label{lemma:gen_H}
    For $x,y,z\in \R^n_+$ and positive $p,q,r$ such that
    $\frac{1}{p}+\frac{1}{q}+\frac{1}{r}=1$, we have
    \[ \sum_{j=1}^nx_jy_jz_j\leq \n{x}_p\n{y}_q\n{z}_r. \]
\end{lemma}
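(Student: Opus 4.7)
The plan is to reduce the three-term statement to two applications of the classical (two-term) H\"older inequality, which the paper may safely take for granted. The algebraic key is that the hypothesis $\tfrac1p+\tfrac1q+\tfrac1r=1$ rearranges as $\tfrac1q+\tfrac1r = 1-\tfrac1p = \tfrac{1}{p'}$, where $p'=p/(p-1)$ is the conjugate exponent of $p$. Thus the exponents $q/p'$ and $r/p'$ are conjugate to each other, which is exactly what is needed to H\"older a second time on a sum of products of $p'$-th powers.

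First I would group the product as $x_j\cdot(y_jz_j)$ and apply two-term H\"older with conjugate pair $(p,p')$, to obtain
\[
\sum_{j=1}^n x_j y_j z_j \;\le\; \n{x}_p\Bigl(\sum_{j=1}^n (y_j z_j)^{p'}\Bigr)^{1/p'}.
\]
Next, since $p'/q + p'/r = 1$ by the computation above, I would apply two-term H\"older a second time to $\sum_{j} y_j^{p'}\cdot z_j^{p'}$ with conjugate pair $(q/p',r/p')$, yielding
\[
\sum_{j=1}^n y_j^{p'} z_j^{p'} \;\le\; \Bigl(\sum_{j=1}^n y_j^{q}\Bigr)^{p'/q}\Bigl(\sum_{j=1}^n z_j^{r}\Bigr)^{p'/r} \;=\; \n{y}_q^{p'}\,\n{z}_r^{p'}.
\]
Taking the $p'$-th root and substituting into the first bound gives the claim.

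The only genuine check is that the exponents are admissible at each H\"older step, i.e.\ $p,p'>1$ and $q/p',r/p'>1$, which follows from $p,q,r>0$ together with $\tfrac1p+\tfrac1q+\tfrac1r=1$ (each reciprocal must lie in $(0,1)$, forcing $p,q,r>1$, and then $\tfrac1q+\tfrac1r<1$ forces $q/p',r/p'>1$). Since $x,y,z$ are in $\R^n_+$ no absolute values are needed. There is essentially no obstacle; the content of the lemma is entirely in the exponent bookkeeping, and the induction on the number of factors generalizes immediately to any finite number of terms should it ever be needed. As an alternative route one could instead start from the weighted AM-GM inequality $abc\le \tfrac{a^p}{p}+\tfrac{b^q}{q}+\tfrac{c^r}{r}$, apply it pointwise after the normalization $x_j\mapsto x_j/\n{x}_p$ (and similarly for $y,z$), and sum over $j$; the right-hand side then collapses to $\tfrac1p+\tfrac1q+\tfrac1r=1$, giving the same bound. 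I would favour the iterated two-term H\"older route for its brevity and for the fact that it requires no extra lemma beyond what is already standard.
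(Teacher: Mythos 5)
Your proof is correct. Note, however, that the paper does not prove this lemma at all---it is stated as a known result and attributed to \citet{beckenbachinequalities}, Chap.~1.18, so there is no in-paper argument to compare against. What you have written is the standard derivation that fills in that black box: reduce to two applications of the two-exponent H\"older inequality via the conjugate exponent $p'=p/(p-1)$, using that $\tfrac{1}{q}+\tfrac{1}{r}=\tfrac{1}{p'}$ makes $q/p'$ and $r/p'$ conjugate. Your exponent bookkeeping checks out ($p,q,r>1$ follows from three positive reciprocals summing to $1$, and $q/p',r/p'>1$ follows from $\tfrac1q+\tfrac1r<1$), the nonnegativity hypothesis disposes of absolute values, and the degenerate case $\n{x}_p=0$ is already handled by the two-term inequality. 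The alternative you sketch via the three-term Young/AM--GM inequality after normalization is equally valid and is in fact closer to how such generalized H\"older inequalities are usually proved for arbitrarily many factors; the iterated route you favour is shorter here and needs nothing beyond the classical two-exponent statement.
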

The above lemma is used to obtain a slightly tighter bound for $ \|
m_t \|^2_{\hv_t^{-1/2}}$, compared to the standard analysis.

\begin{lemma}[Bound for $ \| m_t \|^2_{\hv_t^{-1/2}}$]
\label{lem: grad_norm}
Under Assumption~\ref{as:1}, $\beta_1 < 1$, $\beta_2 < 1$, $\gamma = \frac{\beta_1^2}{\beta_2} < 1$, $\epsilon > 0$, and the definitions of $\alpha_t$, $m_t$, $v_t$, $\hat v_t$ in \amsgrad, it holds that
    \begin{equation}\label{eq:bound_m_t}
     \| m_t \|^2_{\hv_t^{-1/2}}\leq
     \frac{(1-\beta_1)^2}{\sqrt{(1-\beta_2)(1-\gamma)}}\sum_{i=1}^d\sum_{j=1}^t\b_1^{t-j}|g_{j,i}|.
 \end{equation}
\end{lemma}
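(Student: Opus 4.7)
The strategy is to obtain two complementary bounds on $|m_{t,i}|$ and then combine them, one for each of the two factors of $|m_{t,i}|$ inside $m_{t,i}^2/\hat v_{t,i}^{1/2}$. First I would unroll the recursion, using $m_0=0$, to write
\[ m_{t,i} = (1-\beta_1)\sum_{j=1}^t \beta_1^{t-j} g_{j,i}. \]
The trivial triangle inequality then yields the first bound
\[ |m_{t,i}| \leq (1-\beta_1)\sum_{j=1}^t \beta_1^{t-j}|g_{j,i}|, \]
which already has the right shape $\sum_j \beta_1^{t-j}|g_{j,i}|$ that we want to see in the final estimate.

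For the second bound, the key observation is the factorization $\beta_1^{t-j} = \gamma^{(t-j)/2}\,\beta_2^{(t-j)/2}$, which follows immediately from $\gamma = \beta_1^2/\beta_2$ and is exactly what is needed to link the $\beta_1$-weighted sum defining $m_{t,i}$ with the $\beta_2$-weighted sum defining $v_{t,i}$. Applying Cauchy--Schwarz (which is the $p=q=2$ case of the generalized Hölder inequality, \Cref{lemma:gen_H}) to this split gives
\[ m_{t,i}^2 \leq (1-\beta_1)^2 \Bigl(\sum_{j=1}^t \gamma^{t-j}\Bigr)\Bigl(\sum_{j=1}^t \beta_2^{t-j} g_{j,i}^2\Bigr) \leq \frac{(1-\beta_1)^2}{(1-\gamma)(1-\beta_2)}\, v_{t,i} \leq \frac{(1-\beta_1)^2}{(1-\gamma)(1-\beta_2)}\, \hat v_{t,i}, \]
using the geometric sum $\sum_j \gamma^{t-j}\leq 1/(1-\gamma)$, the definition $v_{t,i} = (1-\beta_2)\sum_j \beta_2^{t-j} g_{j,i}^2$, and the monotonicity $\hat v_{t,i}\geq v_{t,i}$ built into \amsgrad. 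Taking square roots, this delivers
\[ \frac{|m_{t,i}|}{\hat v_{t,i}^{1/2}} \leq \frac{1-\beta_1}{\sqrt{(1-\gamma)(1-\beta_2)}}. \]

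Finally, I would combine the two bounds by writing $m_{t,i}^2/\hat v_{t,i}^{1/2} = |m_{t,i}|\cdot(|m_{t,i}|/\hat v_{t,i}^{1/2})$, bounding the first factor by the triangle inequality estimate and the second by the Cauchy--Schwarz estimate:
\[ \frac{m_{t,i}^2}{\hat v_{t,i}^{1/2}} \leq (1-\beta_1)\sum_{j=1}^t \beta_1^{t-j}|g_{j,i}| \cdot \frac{1-\beta_1}{\sqrt{(1-\gamma)(1-\beta_2)}} = \frac{(1-\beta_1)^2}{\sqrt{(1-\gamma)(1-\beta_2)}}\sum_{j=1}^t \beta_1^{t-j}|g_{j,i}|. \]
Summing over $i=1,\ldots,d$ produces the claimed inequality. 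The main conceptual obstacle is not computational but rather recognizing the factorization $\beta_1^{t-j} = \gamma^{(t-j)/2}\beta_2^{(t-j)/2}$ that aligns the $\beta_1$-geometric weights of $m_t$ with the $\beta_2$-geometric weights of $v_t$; this is precisely what allows the $v_t$ in the denominator to absorb a $(1-\beta_1)$ factor and match the gradient moments, improving on a naive analysis that would use $\|m_t\|_1 \leq G$ and lose the data-dependence.
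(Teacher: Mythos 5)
Your proof is correct and yields exactly the paper's bound, but it is organized differently from the paper's argument, in a way worth noting. The paper bounds $\bigl(\sum_{j}\b_1^{t-j}g_{j,i}\bigr)^2\big/\sqrt{\sum_j\b_2^{t-j}g_{j,i}^2}$ in one shot via the three-factor generalized H\"older inequality (\Cref{lemma:gen_H} with $p=q=4$, $r=2$), whereas you split $m_{t,i}^2/\hv_{t,i}^{1/2}=|m_{t,i}|\cdot\bigl(|m_{t,i}|/\hv_{t,i}^{1/2}\bigr)$ and bound the first factor by the triangle inequality and the second by ordinary Cauchy--Schwarz after the factorization $\b_1^{t-j}=\gamma^{(t-j)/2}\b_2^{(t-j)/2}$. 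These are in fact the same computation: the $(4,4,2)$-H\"older step is algebraically equivalent to your two-factor split, and both produce the identical intermediate estimate $\bigl(\sum_j\gamma^{t-j}\bigr)^{1/2}\sum_j\b_1^{t-j}|g_{j,i}|$ (you merely apply the geometric-series bound $\sum_j\gamma^{t-j}\le 1/(1-\gamma)$ one step earlier). Your version buys two small things: it is self-contained, needing no citation of the generalized H\"older inequality, and by dividing by $\hv_{t,i}\ge\epsilon>0$ directly (rather than first passing to $\|m_t\|^2_{v_t^{-1/2}}$ as the paper does) it sidesteps the $\tfrac{0}{0}$ indeterminacy that the paper must separately address when all early gradients in a coordinate vanish. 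The paper's version, in turn, makes explicit why the exponent pattern works and generalizes to other exponent choices.
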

\begin{proof}
From the definition of $m_t$ and $v_t$, it follows that
\begin{align}
 m_t =(1-\beta_{1}) \sum_{j=1}^t \beta_1^{t-j} g_j, \qquad \qquad v_t =(1-\beta_{2}) \sum_{j=1}^t \beta_2^{t-j} g_j^2 \label{eq: mt_vt}.
\end{align}
Then we have
\begin{align*}
 \| m_t \|^2_{\hv_t^{-1/2}} &\leq \| m_t \|^2_{v_t^{-1/2}} = \sum_{i=1}^d  \frac{m_{t,i}^2}{{v}_{t, i}^{1/2}} = \sum_{i=1}^d  \frac{\left(\sum_{j=1}^t(1-\beta_{1}) \beta_{1}^{^{t-j}} g_{j, i} \right)^2}{\sqrt{\sum_{j=1}^t (1-\beta_{2})\beta_{2}^{t-j}g_{j, i}^2}} \notag\\
&= \frac{(1-\beta_1)^2}{\sqrt{1-\beta_2}} \sum_{i=1}^d \frac{\left(\sum_{j=1}^t \beta_{1}^{t-j} g_{j, i} \right)^2}{\sqrt{\sum_{j=1}^t \beta_{2}^{t-j}g_{j, i}^2}} \notag\\
&\leq \frac{(1-\beta_1)^2}{\sqrt{1-\beta_2}} \sum_{i=1}^d \frac{\bigg[\left(\sum_{j=1}^t (\beta_2^{\frac{t-j}{4}}|g_{j,i}|^{\frac{1}{2}})^4 \right)^{\frac{1}{4}} \left(\sum_{j=1}^t (\beta_1^{1/2}\b_2^{-1/4})^{4(t-j)}\right)^{\frac{1}{4}} \left(\sum_{j=1}^t(\b_1^{t-j}|g_{j,i}|)^{\frac{1}{2}\cdot2}\right)^{\frac{1}{2}}\bigg]^2}{\sqrt{\sum_{j=1}^t \beta_{2}^{t-j}g_{j, i}^2}} \notag\\&=\frac{(1-\beta_1)^2}{\sqrt{1-\beta_2}} \sum_{i=1}^d\left(\sum_{j=1}^t \gamma^{t-j}\right)^{\frac{1}{2}} \sum_{j=1}^t\b_1^{t-j}|g_{j,i}|\notag\\&\leq\frac{(1-\beta_1)^2}{\sqrt{(1-\beta_2)(1-\gamma)}}\sum_{i=1}^d\sum_{j=1}^t\b_1^{t-j}|g_{j,i}|,
\end{align*}
where the first inequality follows from the fact that $\hv^{1/2}_{t,i}\geq
v^{1/2}_{t,i}$, the second one follows from the generalized
H\"older inequality (\Cref{lemma:gen_H}) for
\[x_j=\beta_2^{\frac{t-j}{4}}|g_{j,i}|^{\frac{1}{2}},\quad
    y_j=(\beta_1\b_2^{-1/2})^{\frac{t-j}{2}},\quad
    z_j=(\beta_1^{t-j}|g_{j,i}|)^{\frac{1}{2}} \quad \text{and}\quad
    p=q=4, \quad r=2,\] and the third one follows from the sum
of geometric series and the assumption
$\gamma = \frac{\b_1^2}{\b_2}<1$.
 
We now comment on the possibility of observing many zero gradients in the beginning, causing $v_t = 0$ until some $t$, which would cause the appearance of the indeterminate form $\frac{0}{0}$ in the upper bound derived above~--- specifically in the term $\frac{m_{t, i}^2}{v_{t,i}^{1/2}}$.
For this, we will use the convention $\frac{0}{0} = 0$,
in which case the above derivations are always well-defined. For this, we argue as
follows: recall first that $v_{t,i} = 0$ iff $g_{j,i}=0$ for all
$j=1,\dots,t$. This being the case, we also get $m_{t,i} = 0$, and hence, $\frac{m_{t,i}^2}{{v}_{t, i}^{1/2}} = 0$. In fact,
this was done only for convenience, since $\hv_{t,i}\geq \epsilon$ and we
can always exclude zero terms from $\| m_t \|^2_{\hv_t^{-1/2}}$, before using the first line in the above chain of inequalities.
\end{proof}

\begin{lemma}[Bound for $\sum_{t=1}^T \alpha_t \| m_t
    \|^2_{\hv_t^{-1/2}}$]
    \label{lem: sum_grad_norm}
Under Assumption~\ref{as:1}, $\beta_1 < 1$, $\beta_2 < 1$, $\gamma = \frac{\beta_1^2}{\beta_2} < 1$, $\epsilon > 0$, and the definitions of $\alpha_t$, $m_t$, $v_t$, $\hat v_t$ in \amsgrad, we have
    \begin{equation}
        \label{eq: sum of m_t}
        \sum_{t=1}^T \alpha_t \| m_t
        \|^2_{\hv_t^{-1/2}}\leq\frac{(1-\beta_1)\alpha\sqrt{1+\log
                T}}{\sqrt{(1-\beta_2)(1-\gamma)}} \sum_{i=1}^d \sqrt{\sum_{t=1}^T
            g_{t,i}^2}.
    \end{equation}
\end{lemma}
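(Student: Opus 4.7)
The plan is to start from the per-iteration bound supplied by Lemma~\ref{lem: grad_norm} and then handle the sum over $t$ by a swap-of-order argument followed by Cauchy--Schwarz. Concretely, plugging in $\alpha_t=\alpha/\sqrt{t}$ and the bound from \eqref{eq:bound_m_t}, the quantity to control reduces, up to the prefactor $\frac{(1-\beta_1)^2}{\sqrt{(1-\beta_2)(1-\gamma)}}$, to
\[
\alpha\sum_{t=1}^T \frac{1}{\sqrt{t}} \sum_{i=1}^d \sum_{j=1}^t \beta_1^{t-j} |g_{j,i}|
= \alpha\sum_{i=1}^d \sum_{j=1}^T |g_{j,i}| \sum_{t=j}^T \frac{\beta_1^{t-j}}{\sqrt{t}}.
\]

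The first key step is to bound the inner $t$-sum by pulling $1/\sqrt{t}\le 1/\sqrt{j}$ out of the geometric tail:
\[
\sum_{t=j}^T \frac{\beta_1^{t-j}}{\sqrt{t}} \le \frac{1}{\sqrt{j}}\sum_{t=j}^T \beta_1^{t-j} \le \frac{1}{\sqrt{j}\,(1-\beta_1)}.
\]
This is the single place where one factor of $(1-\beta_1)$ is recovered, turning the $(1-\beta_1)^2$ from Lemma~\ref{lem: grad_norm} into the desired $(1-\beta_1)$ in the final bound.

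The second key step is a Cauchy--Schwarz in $j$: for each coordinate $i$,
\[
\sum_{j=1}^T \frac{|g_{j,i}|}{\sqrt{j}} \le \Bigl(\sum_{j=1}^T \tfrac{1}{j}\Bigr)^{1/2}\Bigl(\sum_{j=1}^T g_{j,i}^2\Bigr)^{1/2} \le \sqrt{1+\log T}\,\sqrt{\sum_{j=1}^T g_{j,i}^2},
\]
using the standard estimate $\sum_{j=1}^T 1/j \le 1+\log T$. Combining these two inequalities with the prefactor from Lemma~\ref{lem: grad_norm} yields exactly \eqref{eq: sum of m_t}.

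There is no real obstacle here: the only subtlety is making sure that the exponent of $(1-\beta_1)$ lands on $1$ rather than on $3/2$ (which would happen if one applied Cauchy--Schwarz directly to the inner sum $\sum_j \beta_1^{t-j}|g_{j,i}|$ first). Performing the swap of summation \emph{before} any Cauchy--Schwarz, and absorbing the geometric factor through the bound $1/\sqrt{t}\le 1/\sqrt{j}$, cleanly avoids this loss. One should also note, as in the proof of Lemma~\ref{lem: grad_norm}, that if $g_{j,i}=0$ for all relevant $j$ then the corresponding coordinate contributes $0$ on both sides, so the inequality remains well-defined.
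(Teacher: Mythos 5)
Your proposal is correct and follows essentially the same route as the paper's proof: start from Lemma~\ref{lem: grad_norm}, swap the order of summation, absorb the geometric tail via $\sum_{t=j}^T \alpha_t\beta_1^{t-j}\le \alpha_j/(1-\beta_1)$ (your $1/\sqrt{t}\le 1/\sqrt{j}$ step), and finish with Cauchy--Schwarz and $\sum_{j=1}^T 1/j\le 1+\log T$. Your remark about why the swap must precede Cauchy--Schwarz is a nice clarification but does not change the argument.
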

\begin{proof}
    We have
    \begin{align*}
      \sum_{t=1}^T \alpha_t \| m_t \|^2_{\hv_t^{-1/2}} &\leq
                                                       \frac{(1-\beta_1)^2}{\sqrt{(1-\beta_2)(1-\gamma)}}
                                                       \sum_{i=1}^d
                                                       \sum_{t=1}^T
                                                       \alpha_t\sum_{j=1}^t
                                                       \b_1^{t-j} \vert
                                                       g_{j,i}\vert
      &&\text{(\Cref{eq:bound_m_t})}\\ &=\frac{(1-\beta_1)^2}{\sqrt{(1-\beta_2)(1-\gamma)}}
                                         \sum_{i=1}^d \sum_{j=1}^T \sum_{t=j}^T \alpha_t \b_1^{t-j} \vert
                                         g_{j,i}\vert &&\text{(Changing order of summation)}\\
                                                     &\leq \frac{(1-\beta_1)}{\sqrt{(1-\beta_2)(1-\gamma)}}  \sum_{i=1}^d
                                                       \sum_{j=1}^T
                                                       \alpha_j  \vert
                                                       g_{j,i}\vert &&\text{\big(Using}\sum_{t=j}^T\alpha_t\b_1^{t-j}\leq\frac{\a_j}{1-\b_1}\big)\\
                                                     &\leq
                                                       \frac{1-\beta_1}{\sqrt{(1-\beta_2)(1-\gamma)}}
                                                       \sum_{i=1}^d
                                                       \sqrt{\sum_{j=1}^T
                                                       \alpha_j^2}\sqrt{\sum_{j=1}^T
                                                       g_{j,i}^2 } && \text{(Cauchy-Schwarz)}\\
                                                     &\leq\frac{(1-\beta_1)\alpha\sqrt{1+\log
                                                       T}}{\sqrt{(1-\beta_2)(1-\gamma)}}
                                                       \sum_{i=1}^d
                                                       \sqrt{\sum_{t=1}^T
                                                       g_{t,i}^2} &&
                                                                     \text{\big(Using
                                                                     }
                                                                     \sum_{j=1}^T\frac{1}{j}\leq
                                                                     1+\log
                                                                     T\big).
    \qedhere
    \end{align*}
\end{proof}
We now restate Theorem~\ref{th:amsgrad} for easy navigation and proceed to its proof.
\begin{reptheorem}{th:amsgrad}
Under Assumption~\ref{as:1}, $\beta_1 < 1$, $\beta_2 < 1$, $\gamma = \frac{\beta_1^2}{\beta_2} < 1$, and $\epsilon > 0$, \hyperref[alg:amsgrad]{\amsgrad} achieves the regret
\begin{equation}
  R(T)  \leq \frac{D^2\sqrt{T}}{2\a(1-\b_1)}\sum_{i=1}^d \hv^{1/2}_{T,i}\\ +\frac{\alpha\sqrt{1+\log T}}{\sqrt{(1-\beta_2)(1-\gamma)}} \sum_{i=1}^d
  \sqrt{\sum_{t=1}^T g_{t,i}^2}.\notag
\end{equation}
\end{reptheorem}
\begin{proof}
    Let $x\in \argmin_{y\in\cX} \sum_{t=1}^Tf_t(y)$. Then by
    convexity, we immediately have
    \[R(T)\leq \sum_{t=1}^T\lr{g_t,x_t-x}.\]
    Hence, our goal is to bound the latter expression.  If we sum the
    inequality from \Cref{lem: decop} over $t=1,\dots, T$ and use the fact that
    $m_0=0$, we obtain
    \begin{align}
\sum_{t=1}^T \langle g_t, x_t - x \rangle &= \frac{1}{1-\beta_1} \big( \langle m_T, x_T - x \rangle - \langle m_0, x_0 - x \rangle \big) + \langle m_0, x_0 - x \rangle + \sum_{t=1}^{T-1} \langle m_t, x_t - x \rangle \notag\\
&\qquad +\frac{\beta_1}{1-\beta_1} \sum_{t=1}^T \langle m_{t-1}, x_{t-1} - x_t \rangle\notag\\
&= \frac{\beta_1}{1-\beta_1} \langle m_T, x_T - x \rangle  + \sum_{t=1}^{T} \langle m_t, x_t - x \rangle+\frac{\beta_1}{1-\beta_1} \sum_{t=1}^T \langle m_{t-1}, x_{t-1} - x_t \rangle.
      \label{eq: new_ams_allterms}
\end{align}
We will separately bound each term in the right-hand side
of~\eqref{eq: new_ams_allterms} and then combine these bounds
together.

$\bullet$ \emph{Bound for $\sum_{t=1}^{T} \langle m_t, x_t - x \rangle$}.

As $x\in\mathcal{X}$, by the nonexpansiveness property~\eqref{eq: nonexp}, we get
\begin{align}\label{eq: new_ams_thirdterm_1}
\| x_{t+1} - x\|^2_{\hat{v}_t^{1/2}} &= \| {P}_{\mathcal{X}}^{\hat{v}_t^{1/2}} \left(x_{t} - \alpha_t \hat{v}_t^{-1/2} m_t \right) - x\|^2_{\hat{v}_t^{1/2}} \notag \\
&\leq \| x_{t} - \alpha_t \hat{v}_t^{-1/2} m_t - x\|^2_{\hat{v}_t^{1/2}}\notag \\
&= \| x_t - x \|^2_{\hat{v}_t^{1/2}} - 2\alpha_t \langle m_t, x_t - x\rangle + \| \alpha_t \hat{v}_t^{-1/2} m_t \|^2_{\hat{v}_t^{1/2}}\notag\\&= \| x_t - x \|^2_{\hat{v}_t^{1/2}} - 2\alpha_t \langle m_t, x_t - x \rangle + \a_t^2\| m_t \|^2_{\hat{v}_t^{-1/2}}.
\end{align} 

We rearrange and divide both sides of \eqref{eq: new_ams_thirdterm_1} by $2\alpha_t$ to get
\begin{align}
\langle m_t, x_t - x\rangle &\leq \frac{1}{2\a_t} \| x_t - x \|^2_{\hat{v}_t^{1/2}} - \frac{1}{2\a_t}  \| x_{t+1}-x\|^2_{\hat{v}_t^{1/2}} + \frac{\alpha_t}{2} \| m_t \|^2_{\hv_t^{-1/2}} \notag\\
 &= \frac{1}{2\a_{t-1}} \| x_t - x \|^2_{\hat{v}_{t-1}^{1/2}} - \frac{1}{2\a_t}\| x_{t+1} - x \|^2_{\hat{v}_t^{1/2}} + \frac{1}{2}\sum_{i=1}^d \left( \frac{\hat{v}_{t, i}^{1/2}}{\alpha_t} - \frac{\hat{v}_{t-1, i}^{1/2}}{\alpha_{t-1}} \right) ( x_{t, i} - x_i )^2 + \frac{\alpha_t}{2} \| m_t \|^2_{\hv_t^{-1/2}} \notag\\
&\leq \frac{1}{2\a_{t-1}} \| x_t - x \|^2_{\hat{v}_{t-1}^{1/2}} - \frac{1}{2\a_t}\| x_{t+1} - x \|^2_{\hat{v}_t^{1/2}} + \frac{D^2}{2} \sum_{i=1}^d \left( \frac{\hat{v}_{t, i}^{1/2}}{\alpha_t} - \frac{\hat{v}_{t-1, i}^{1/2}}{\alpha_{t-1}} \right) + \frac{\alpha_t}{2} \| m_t \|^2_{{\hv}_t^{-1/2}}, \label{eq: new_ams_reg_bd}
\end{align} 
where the last inequality is due to the fact that
$\hat{v}_{t, i} \geq \hat{v}_{t-1, i}$,
$\frac{1}{\a_t}\geq \frac{1}{\a_{t-1}}$, and the definition of
$D$.%
\footnote{Note that for  $t=1$  we suppose that $\frac{1}{\a_0}=0$; this
makes the above derivation still valid, as $\alpha_0$ is not used in the algorithm, and this is only for convenience.}

Summing \eqref{eq: new_ams_reg_bd} over $t=1,\dots T$ and using that
$\frac{1}{2\a_{0}} \| x_1 - x \|^2_{\hat{v}_{0}^{1/2}} = 0$ yields
\begin{align}
\sum_{t=1}^T \langle m_t, x_t - x\rangle &\leq \frac{D^2}{2\alpha_T} \sum_{i=1}^d \hat{v}_{T, i}^{1/2} +\frac{1}{2} \sum_{t=1}^T \alpha_t \| m_t \|^2_{\hv_t^{-1/2}}.\label{eq: new_ams_linear2}
\end{align}

$\bullet$ \emph{Bound for $\sum_{t=1}^T \langle m_{t-1}, x_{t-1} - x_t \rangle$}.

Now let us bound the last term in~\eqref{eq: new_ams_allterms}.
\begin{align}
  \sum_{t=1}^T \langle m_{t-1}, x_{t-1} - x_t \rangle &=
\sum_{t=2}^T \langle m_{t-1}, x_{t-1} - x_t \rangle=\sum_{t=1}^{T-1} \langle m_{t}, x_{t} - x_{t+1} \rangle&&\text{(Using
$m_0=0$)}\notag\\
&\leq \sum_{t=1}^{T-1} \| m_{t}\|_{\hat{v}_t^{-1/2}} \|
x_{t+1} - x_{t} \|_{\hat{v}_{t}^{1/2}} && \text{(H\"older
inequality)}\notag \\ 
&= \sum_{t=1}^{T-1} \| m_{t}\|_{\hat{v}_t^{-1/2}}
\Big\|P_{\mathcal{X}}^{\hat{v}_{t}^{1/2}}\left( x_{t} - \alpha_{t}
\hat{v}_{t}^{-1/2}m_{t}\right)-P_{\mathcal{X}}^{\hat{v}_{t}^{1/2}} (x_{t}) \Big\|_{\hat{v}_{t}^{1/2}} &&
\text{(Using $x_{t}\in \cX $)}\notag \\ 
&\leq
\sum_{t=1}^{T-1}\alpha_{t} \| m_{t}\|_{\hat{v}_{t}^{-1/2}} \|
\hat{v}_{t}^{-1/2}m_{t} \|_{\hat{v}_{t}^{1/2}}
&&\text{(Nonexpansiveness of $P_{\cX}^{\hat{v}_{t}^{1/2}}$)}\notag\\
&= \sum_{t=1}^{T-1}\alpha_{t} \|m_{t}\|_{\hat{v}_{t}^{-1/2}}^2&&\text{(Property $\n{u^{-1}x}_u=\n{x}_{u^{-1}}$)}\label{eq: sum_of_subseq}.
\end{align}
At this point, we could use \cref{eq: sum of m_t} to obtain a final bound for
$\sum_{t=1}^T\lr{m_{t-1},x_{t-1}-x_t}$. However, we postpone it to combine it with the term $\lr{m_T,x_T-x}$ in \eqref{eq:
    new_ams_allterms} to have a shorter expression.

$\bullet$ \emph{Bound for $\lr{m_T,x_T-x}$}.

This term is the easiest for estimation:
\begin{align}\label{eq: sum_of_single}
    \lr{m_T,x_T-x}&\leq \n{m_T}_{\hv_T^{-1/2}}\n{x_T-x}_{\hv_T^{1/2}}&&\text{(H\"older's inequality)}\notag\\ 
    &\leq
\a_T\n{m_T}_{\hv_T^{-1/2}}^2 +
\frac{1}{4\a_T} \n{x_T-x}^2_{\hv_T^{1/2}} &&\text{(Young's inequality)}\notag\\ 
&\leq \a_T\n{m_T}_{\hv_T^{-1/2}}^2+
\frac{D^2}{4\a_T} \sum_{i=1}^d \hv^{1/2}_{T,i}&&\text{(Definition of $D$)}
\end{align}
We now have all the ingredients required to bound the right-hand side of
\eqref{eq: new_ams_allterms}. To that end, after all substitutions and some
straightforward algebra, we obtain
\begin{align} \text{RHS of \eqref{eq: new_ams_allterms}} &=
    \frac{\beta_1}{1-\beta_1}\left( \langle m_T, x_T - x \rangle +
        \sum_{t=1}^T \langle m_{t-1}, x_{t-1} - x_t \rangle\right) +
    \sum_{t=1}^{T} \langle m_t, x_t - x \rangle\notag\\ 
    &\leq
    \frac{\beta_1}{1-\beta_1}\left( \frac{D^2}{4\a_T}\sum_{i=1}^d
        \hv^{1/2}_{T,i} + \sum_{t=1}^{T}\alpha_{t} \|
        m_{t}\|_{\hat v_{t}^{-1/2}}^2\right) + \frac{D^2}{2\alpha_T}
    \sum_{i=1}^d \hat{v}_{T, i}^{1/2} +\frac{1}{2} \sum_{t=1}^T
    \alpha_t \| m_t \|^2_{\hat v_t^{-1/2}} \notag\\
    &=
    \frac{(2-\b_1)D^2}{4\a_T(1-\b_1)}\sum_{i=1}^d \hv^{1/2}_{T,i} +
    \frac{1+\b_1}{2(1-\b_1)} \sum_{t=1}^T \alpha_t \| m_t
    \|^2_{\hat v_t^{-1/2}} \notag\\
    &\leq
    \frac{D^2\sqrt{T}}{2\a(1-\b_1)}\sum_{i=1}^d \hv^{1/2}_{T,i} +
    \frac{1}{1-\b_1} \sum_{t=1}^T \alpha_t \| m_t
    \|^2_{\hat v_t^{-1/2}} \notag\\
    &\leq \frac{D^2\sqrt{T}}{2\a(1-\b_1)}\sum_{i=1}^d \hv^{1/2}_{T,i} +\frac{\alpha\sqrt{1+\log T}}{\sqrt{(1-\beta_2)(1-\gamma)}} \sum_{i=1}^d
    \sqrt{\sum_{t=1}^T g_{t,i}^2},\label{eq: ams_final_est}
\end{align}
where the second inequality follows from the assumption $\frac{2-\b_1}{4}\leq \frac
12$, $\frac{1+\b_1}{2}\leq 1$, and $\alpha_T = \frac{\alpha}{\sqrt{T}}$,
and the last follows by \Cref{lem: sum_grad_norm}.
\end{proof}

\subsection{Proofs for \adamnc}
We first give analogous results to \Cref{lem: grad_norm,lem: sum_grad_norm}, which are mostly standard and
simplified thanks to a constant $\beta_1$.
\begin{lemma}[Bound for $\| m_t \|^2_{v_t^{-1/2}}$]
\label{lem: nc_grad_norm}
Under Assumption~\ref{as:1}, $\beta_1 < 1$, $\epsilon > 0$, and the definitions of $\alpha_t$, $m_t$, $v_t$ in~\adamnc, it holds that
\begin{equation}
\| m_t \|^2_{v_t^{-1/2}} \leq \sqrt{t}(1-\beta_1) \sum_{i=1}^d\sum_{j=1}^t \frac{\beta_1^{t-j} g_{j, i}^2}{\sqrt{\sum_{k=1}^j g_{k, i}^2}}. \notag
\end{equation}
\end{lemma}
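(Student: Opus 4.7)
The plan is to mimic the proof of Lemma for \amsgrad (Lemma \ref{lem: grad_norm}), but with a simpler Cauchy--Schwarz step since now $v_t$ is essentially the running average of squared gradients rather than an exponentially weighted one. First I would plug in the closed-form expression $m_t=(1-\beta_1)\sum_{j=1}^t\b_1^{t-j}g_j$ and use the \adamnc update $v_t=\frac{1}{t}\big(\sum_{j=1}^t g_j^2+\epsilon\mathbf{1}\big)$, giving
\begin{equation*}
\| m_t \|^2_{v_t^{-1/2}}=\sum_{i=1}^d\frac{m_{t,i}^2}{v_{t,i}^{1/2}}=\sqrt{t}\,(1-\beta_1)^2\sum_{i=1}^d\frac{\big(\sum_{j=1}^t\b_1^{t-j}g_{j,i}\big)^2}{\sqrt{\sum_{j=1}^t g_{j,i}^2+\epsilon}}.
\end{equation*}

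Next I would split the weights as $\b_1^{t-j}=\b_1^{(t-j)/2}\cdot\b_1^{(t-j)/2}$ and apply the ordinary Cauchy--Schwarz inequality to the numerator:
\begin{equation*}
\Big(\sum_{j=1}^t\b_1^{t-j}g_{j,i}\Big)^2\leq \Big(\sum_{j=1}^t\b_1^{t-j}\Big)\Big(\sum_{j=1}^t\b_1^{t-j}g_{j,i}^2\Big)\leq \frac{1}{1-\b_1}\sum_{j=1}^t\b_1^{t-j}g_{j,i}^2.
\end{equation*}
This cancels one factor of $(1-\b_1)$ and leaves
\begin{equation*}
\| m_t \|^2_{v_t^{-1/2}}\leq \sqrt{t}\,(1-\b_1)\sum_{i=1}^d\sum_{j=1}^t\frac{\b_1^{t-j}g_{j,i}^2}{\sqrt{\sum_{k=1}^t g_{k,i}^2+\epsilon}}.
\end{equation*}

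To finish, I would exchange the sum in the denominator for a sum only up to index $j$: since $\sum_{k=1}^j g_{k,i}^2\leq \sum_{k=1}^t g_{k,i}^2\leq \sum_{k=1}^t g_{k,i}^2+\epsilon$, the reciprocal inequality yields exactly the claimed bound. The one subtlety is well-definedness when $\sum_{k=1}^j g_{k,i}^2=0$: in that case all $g_{k,i}=0$ for $k\leq j$, so in particular $g_{j,i}=0$ and the corresponding term in the numerator vanishes, so we apply the convention $0/0=0$ (as done in the proof of \Cref{lem: grad_norm}). I do not anticipate a hard step in this argument; the only mild obstacle is choosing the Cauchy--Schwarz split in the right way so that the geometric series $\sum_{j=1}^t\b_1^{t-j}\leq 1/(1-\b_1)$ kills exactly one power of $(1-\b_1)$, matching the constant in the statement.
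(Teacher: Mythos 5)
Your proposal is correct and follows essentially the same route as the paper's proof: plug in the closed forms of $m_t$ and $v_t$, drop $\epsilon$ from the denominator, apply Cauchy--Schwarz with the split $\b_1^{t-j}=\b_1^{(t-j)/2}\cdot\b_1^{(t-j)/2}$, bound the geometric series by $1/(1-\b_1)$, and shrink the denominator from $\sum_{k=1}^t$ to $\sum_{k=1}^j$. The only cosmetic difference is the point at which $\epsilon$ is discarded (the paper does it first, you do it last), and your handling of the $0/0$ convention matches the paper's footnote.
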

\begin{proof}
    Using the expression \eqref{eq: mt_vt} for $m_t$ and
    $v_{t,i}=\frac{1}{t}\left(\sum_{j=1}^t  g_{j,i}^2 + \epsilon\right)$, we obtain:%
\footnote{In the sequel, the same comments about the indeterminate form $\frac{0}{0}$ apply here as in Lemma~\ref{lem: grad_norm}.}
\begin{align}
\| m_t \|^2_{v_t^{-1/2}} &= \sum_{i=1}^d \frac{m_{t, i}^2}{v_{t, i}^{1/2}} = \sum_{i=1}^d \frac{\left( \sum_{j=1}^t (1-\beta_1)\beta_1^{t-j} g_{j, i} \right)^2}{\sqrt{\frac{\epsilon}{t} + \frac{1}{t} \sum_{k=1}^t g_{k, i}^2}}\notag\\
&\leq \sqrt{t}(1-\beta_1)^2 \sum_{i=1}^d \frac{\left( \sum_{j=1}^t \beta_1^{t-j} g_{j, i} \right)^2}{\sqrt{\sum_{k=1}^t g_{k, i}^2}} \notag \\
&\leq \sqrt{t}(1-\beta_1)^2 \sum_{i=1}^d \frac{\left( \sum_{j=1}^t \beta_1^{t-j} g_{j, i}^2\right)\left(\sum_{j=1}^t\beta_1^{t-j}\right)}{\sqrt{\sum_{k=1}^t g_{k, i}^2}} \notag \\
&\leq \sqrt{t}(1-\beta_1) \sum_{i=1}^d \frac{ \sum_{j=1}^t \beta_1^{t-j} g_{j, i}^2}{\sqrt{\sum_{k=1}^t g_{k, i}^2}} \leq \sqrt{t}(1-\beta_1) \sum_{i=1}^d\sum_{j=1}^t \frac{\beta_1^{t-j} g_{j, i}^2}{\sqrt{\sum_{k=1}^j g_{k, i}^2}},
\end{align}
where the first inequality is due to $\epsilon > 0$, second inequality
is by Cauchy-Schwarz, the third one by the sum of geometric series, and
the final one is by $j \leq t$.
\end{proof}

\begin{lemma}[Bound for $\sum_{t=1}^T\alpha_t \| m_t \|^2_{v_t^{-1/2}}$]
\label{eq: nc_sum_grad_norm}
Under Assumption~\ref{as:1}, $\beta_1 < 1$, $\epsilon > 0$, and the definitions of $\alpha_t$, $m_t$, $v_t$ in~\adamnc, it holds that
\begin{equation}
\sum_{t=1}^T \alpha_t \| m_t \|^2_{v_t^{-1/2}} \leq 2\alpha\sum_{i=1}^d \sqrt{\sum_{t=1}^T g_{t, i}^2}.
\end{equation}
\end{lemma}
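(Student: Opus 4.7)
The plan is to feed the per-step bound from Lemma~\ref{lem: nc_grad_norm} into the sum $\sum_{t=1}^T \a_t \n{m_t}^2_{v_t^{-1/2}}$ and exploit the fact that the step size schedule $\a_t = \a/\sqrt{t}$ is tailored exactly to cancel the factor $\sqrt{t}$ on the right-hand side of that lemma. Concretely, multiplying both sides of Lemma~\ref{lem: nc_grad_norm} by $\a_t$ gives
\begin{equation*}
\a_t \| m_t \|^2_{v_t^{-1/2}} \leq \a(1-\b_1) \sum_{i=1}^d\sum_{j=1}^t \frac{\b_1^{t-j} g_{j, i}^2}{\sqrt{\sum_{k=1}^j g_{k, i}^2}},
\end{equation*}
so after summing over $t=1,\dots,T$ the dependence on $t$ only lives in the geometric weights $\b_1^{t-j}$.

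Next I would swap the order of summation in the double sum $\sum_{t=1}^T\sum_{j=1}^t$ to $\sum_{j=1}^T\sum_{t=j}^T$, pulling the $j$-dependent factor $g_{j,i}^2/\sqrt{\sum_{k=1}^j g_{k,i}^2}$ outside and leaving only $\sum_{t=j}^T \b_1^{t-j} \leq 1/(1-\b_1)$ inside. This geometric-series bound absorbs the $(1-\b_1)$ factor and produces
\begin{equation*}
\sum_{t=1}^T \a_t \| m_t \|^2_{v_t^{-1/2}} \leq \a \sum_{i=1}^d \sum_{j=1}^T \frac{g_{j,i}^2}{\sqrt{\sum_{k=1}^j g_{k,i}^2}}.
\end{equation*}

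The final step is the classical \adagrad-type telescoping argument. Fixing a coordinate $i$ and writing $S_j := \sum_{k=1}^j g_{k,i}^2$, I would use the elementary inequality $\sqrt{a}-\sqrt{b} \geq (a-b)/(2\sqrt{a})$ for $a\geq b\geq 0$ to obtain $(S_j-S_{j-1})/\sqrt{S_j}\leq 2(\sqrt{S_j}-\sqrt{S_{j-1}})$, which telescopes to $2\sqrt{S_T}=2\sqrt{\sum_{t=1}^T g_{t,i}^2}$. Combining this with the previous display yields the claimed bound $\sum_{t=1}^T \a_t \| m_t \|^2_{v_t^{-1/2}} \leq 2\a \sum_{i=1}^d \sqrt{\sum_{t=1}^T g_{t,i}^2}$.

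I do not anticipate any real obstacle: the three ingredients (substitution of the per-step bound, Fubini plus a geometric sum, and the telescoping AdaGrad lemma) are all standard, and the constants line up cleanly thanks to the precise matching between $\a_t=\a/\sqrt{t}$ and the $\sqrt{t}$ produced in Lemma~\ref{lem: nc_grad_norm}. The only minor care point is the convention $0/0=0$ in case some coordinate has only zero gradients up to time $j$, which is handled exactly as in the proof of Lemma~\ref{lem: nc_grad_norm}.
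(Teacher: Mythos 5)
Your proposal is correct and follows essentially the same route as the paper's proof: substitute the per-step bound from Lemma~\ref{lem: nc_grad_norm}, cancel the $\sqrt{t}$ against $\alpha_t=\alpha/\sqrt{t}$, swap the order of summation and bound the geometric series by $1/(1-\beta_1)$, then apply the standard $\sum_j a_j/\sqrt{\sum_{k\le j}a_k}\le 2\sqrt{\sum_j a_j}$ inequality (which the paper cites from \citet[Lemma~3.5]{auer2002adaptive} and you reprove via the elementary $\sqrt{a}-\sqrt{b}\ge (a-b)/(2\sqrt{a})$ telescoping). No gaps.
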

\begin{proof}
We have, by using Lemma~\ref{lem: nc_grad_norm}
\begin{align}
\sum_{t=1}^T \alpha_t \| m_t \|^2_{v_t^{-1/2}} &= \sum_{t=1}^T \alpha_t \sqrt{t}(1-\beta_1) \sum_{i=1}^d\sum_{j=1}^t \frac{\beta_1^{t-j} g_{j, i}^2}{\sqrt{\sum_{k=1}^j g_{k, i}^2}} \notag \\
&=\alpha(1-\beta_1) \sum_{i=1}^d \sum_{t=1}^T \sum_{j=1}^t \frac{\beta_1^{t-j} g_{j, i}^2}{\sqrt{\sum_{k=1}^j g_{k, i}^2}} \notag \\
&= \alpha(1-\beta_1) \sum_{i=1}^d \sum_{j=1}^T \sum_{t=j}^T \frac{\beta_1^{t-j} g_{j, i}^2}{\sqrt{\sum_{k=1}^j g_{k, i}^2}} \notag \\
&\leq \alpha \sum_{i=1}^d \sum_{j=1}^T \frac{g_{j, i}^2}{\sqrt{\sum_{k=1}^j g_{k, i}^2}} \notag \\
&\leq 2\alpha \sum_{i=1}^d \sqrt{\sum_{j=1}^T g_{j, i}^2}, \notag
\end{align}
where the second equality is due to $\alpha_t = \frac{\alpha}{\sqrt{t}}$, third equality is by changing the order of summation, first inequality by summation of the geometric series.
For the last inequality, we use a standard inequality for numerical sequences, encountered for example in~\citet[Lemma~3.5]{auer2002adaptive}
\begin{equation*}
\sum_{j=1}^T \frac{a_j}{\sqrt{\sum_{k=1}^j a_k}} \leq 2\sqrt{\sum_{j=1}^T a_j}
	\quad
	\text{for all $a_{1},\dotsc,a_{T}\geq0$.}
	\qedhere
\end{equation*}
\end{proof}

We now restate Theorem~\ref{th: adamnc} and present its proof.

\begin{reptheorem}{th: adamnc}
Under Assumption~\ref{as:1}, $\beta_1 < 1$, and $\epsilon > 0$, \hyperref[alg:adamnc]{\adamnc} enjoys the regret bound
\begin{align}
R(T) &\leq \frac{D^2\sqrt{T}}{2\alpha(1-\beta_1)} \sum_{i=1}^d v_{T, i}^{1/2}+ \frac{2\alpha}{1-\beta_1}\sum_{i=1}^d \sqrt{\sum_{t=1}^T g_{t, i}^2}.\notag
\end{align}
\end{reptheorem}

\begin{proof} We will follow the proof structure of Theorem~\ref{th:amsgrad}.
First, we start from~\eqref{eq: new_ams_allterms} which applies to \adamnc as the update of $m_t$ is the same as~\amsgrad
\begin{align}
R(T)\leq \sum_{t=1}^T \langle g_t, x_t - x \rangle &= \frac{\beta_1}{1-\beta_1} \langle m_T, x_T - x \rangle + \sum_{t=1}^T \langle m_t, x_t - x \rangle + \frac{\beta_1}{1-\beta_1} \sum_{t=1}^T \langle m_{t-1}, x_{t-1}-x_t \rangle.\label{eq: nc_main_ineq1}
\end{align}
Then we again bound each term in the right-hand side seperately.

$\bullet$ \emph{Bound for $\sum_{t=1}^T \langle m_t, x_t - x \rangle$.}

We proceed similarly to the derivations in~\eqref{eq: new_ams_thirdterm_1}
and~\eqref{eq: new_ams_reg_bd}, the main change being that we now have $v_t$ instead of $\hat{v}_t$.
We have:
\begin{align}
\langle m_t, x_t - x \rangle &\leq \frac{1}{2\a_{t-1}} \| x_t - x \|^2_{{v}_{t-1}^{1/2}} - \frac{1}{2\a_t}\| x_{t+1} - x \|^2_{{v}_t^{1/2}} + \frac{1}{2}\sum_{i=1}^d \left( \frac{{v}_{t, i}^{1/2}}{\alpha_t} - \frac{{v}_{t-1, i}^{1/2}}{\alpha_{t-1}} \right) ( x_{t, i} - x_i )^2 + \frac{\alpha_t}{2} \| m_t \|^2_{v_t^{-1/2}} \notag\\
&\leq \frac{1}{2\a_{t-1}} \| x_t - x \|^2_{{v}_{t-1}^{1/2}} - \frac{1}{2\a_t}\| x_{t+1} - x \|^2_{{v}_t^{1/2}} + \frac{D^2}{2} \sum_{i=1}^d \left( \frac{{v}_{t, i}^{1/2}}{\alpha_t} - \frac{{v}_{t-1, i}^{1/2}}{\alpha_{t-1}} \right) + \frac{\alpha_t}{2} \| m_t \|^2_{{v}_t^{-1/2}},
\end{align}
where the last inequality is due to $\frac{v_{t, i}^{1/2}}{\alpha_t} \geq \frac{v_{t-1, i}^{1/2}}{\alpha_{t-1}}$, since by definition $v_{t, i} = \frac{1}{t} \left(\sum_{j=1}^t g_{j, i}^2+\epsilon\right)$ and $\alpha_t = \frac{\alpha}{\sqrt{t}}$.

We now proceed to telescope this inequality, assuming as before that $\frac{1}{\alpha_0} = 0$.
Doing so, we obtain:
\begin{equation}
\sum_{t=1}^T \langle m_t, x_t - x \rangle \leq \frac{D^2}{2} \sum_{i=1}^d \frac{v_{T, i}^{1/2}}{\alpha_T} + \frac{1}{2} \sum_{t=1}^T \alpha_t \| m_t \|^2_{v_t^{-1/2}}.\label{eq: nc_first_term}
\end{equation}

$\bullet$ \emph{Bounds for $\langle m_T, x_T - x\rangle$ and $\sum_{t=1}^T\langle m_{t-1}, x_{t-1} - x_t \rangle$}

These bounds will be similar as in the proof of Theorem~\ref{th:amsgrad}. Again, the only change
in calculations in~\eqref{eq: sum_of_subseq} and~\eqref{eq:
    sum_of_single} is that now we have $v_t$ instead of $\hat{v}_t$
\begin{equation}
\sum_{t=1}^T \langle m_{t-1}, x_{t-1}- x_t \rangle \leq \sum_{t=1}^{T-1} \alpha_t \| m_t \|^2_{v_t^{-1/2}},\label{eq: nc_sec_term}
\end{equation}
and
\begin{equation}
\langle m_T, x_T - x \rangle \leq \alpha_T \| m_T \|^2_{v_T^{-1/2}} + \frac{D^2}{4\alpha_T} \sum_{i=1}^d v_{T, i}^{1/2}.\label{eq: nc_third_term}
\end{equation}

We now combine~\eqref{eq: nc_first_term},~\eqref{eq: nc_sec_term},
and~\eqref{eq: nc_third_term} in~\eqref{eq: nc_main_ineq1}, estimate
using the same steps in~\eqref{eq: ams_final_est}, and use the bound
for $\sum_{t=1}^T \alpha_t \| m_t \|^2_{v_t^{-1/2}}$ from
Lemma~\ref{eq: nc_sum_grad_norm} to conclude:

\begin{align*}
\sum_{t=1}^T \langle g_t, x_t - x \rangle &= \frac{\beta_1}{1-\beta_1} \langle m_T, x_T - x \rangle + \sum_{t=1}^T \langle m_t, x_t - x \rangle + \frac{\beta_1}{1-\beta_1} \sum_{t=1}^T \langle m_{t-1}, x_{t-1}-x_t \rangle\notag \\
&\leq \left(\frac{D^2}{2} + \frac{\beta_1 D^2}{4(1-\beta_1)} \right)\sum_{i=1}^d \frac{v_{T, i}^{1/2}}{\alpha_T} + \left( \frac{1}{2} + \frac{\beta_1}{1-\beta_1} \right)\sum_{t=1}^T \alpha_t \| m_t \|^2_{v_t^{-1/2}} \notag \\
&\leq \frac{D^2\sqrt{T}}{2\alpha(1-\beta_1)} \sum_{i=1}^d v_{T, i}^{1/2}+ \frac{2\alpha}{1-\beta_1}\sum_{i=1}^d \sqrt{\sum_{t=1}^T g_{t, i}^2}.
\qedhere
\end{align*}
\end{proof}

\subsection{Proofs for \sadam}
\begin{lemma}[Bound for $\| m_t \|^2_{\hat{v}_t^{-1}}$]
\label{lem: first_lem_sadam}
Under Assumption~\ref{as:1}, $\beta_1 < 1$, $\epsilon > 0$, and the definitions of $\alpha_t$, $m_t$, $v_t$, $\hat v_t$ in \sadam, it holds that
\begin{equation}\label{eq:sadam_mt}
\| m_t \|^2_{\hat{v}_t^{-1}} \leq t (1-\beta_1) \sum_{i=1}^d \sum_{j=1}^t \frac{\beta_1^{t-j}g_{j, i}^2}{\sum_{k=1}^j g_{k, i}^2 + \epsilon}.
\end{equation}
\end{lemma}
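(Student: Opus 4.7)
The plan is to unroll both recursions for $m_t$ and $\hat v_t$, isolate the coordinate sum that defines $\| m_t\|^2_{\hat v_t^{-1}}$, and then chain two elementary inequalities: a weighted Cauchy--Schwarz on the numerator and a monotonicity bound on the denominator.

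First, expanding $m_t = \beta_1 m_{t-1} + (1-\beta_1) g_t$ with $m_0 = 0$ gives $m_{t,i} = (1-\beta_1)\sum_{j=1}^t \beta_1^{t-j} g_{j,i}$, exactly as in~\eqref{eq: mt_vt}. For $\hat v_t$, plugging $\beta_{2t} = 1 - 1/t$ into $v_t = \beta_{2t} v_{t-1} + (1-\beta_{2t})g_t^2$ yields the telescoping identity $t\,v_t = (t-1)v_{t-1} + g_t^2$, so $v_{t,i} = \tfrac{1}{t}\sum_{k=1}^t g_{k,i}^2$ and therefore
\begin{equation*}
\hat v_{t,i} \;=\; v_{t,i} + \frac{\epsilon}{t} \;=\; \frac{1}{t}\Bigl(\sum_{k=1}^t g_{k,i}^2 + \epsilon\Bigr).
\end{equation*}

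Next, I substitute these into the coordinate-wise expression for the weighted norm:
\begin{equation*}
\|m_t\|^2_{\hat v_t^{-1}} \;=\; \sum_{i=1}^d \frac{m_{t,i}^2}{\hat v_{t,i}} \;=\; t(1-\beta_1)^2\sum_{i=1}^d \frac{\bigl(\sum_{j=1}^t \beta_1^{t-j} g_{j,i}\bigr)^2}{\sum_{k=1}^t g_{k,i}^2 + \epsilon}.
\end{equation*}
A weighted Cauchy--Schwarz (splitting $\beta_1^{t-j} = \beta_1^{(t-j)/2}\cdot\beta_1^{(t-j)/2}$) bounds the squared numerator by $\bigl(\sum_{j=1}^t \beta_1^{t-j}\bigr)\bigl(\sum_{j=1}^t \beta_1^{t-j} g_{j,i}^2\bigr) \leq \tfrac{1}{1-\beta_1}\sum_{j=1}^t \beta_1^{t-j} g_{j,i}^2$, cancelling one factor of $(1-\beta_1)$.

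Finally, to match the target right-hand side I need to move the denominator inside the sum over $j$, which is where the only inequality beyond Cauchy--Schwarz is needed. For each $j\leq t$ we have $\sum_{k=1}^j g_{k,i}^2 + \epsilon \leq \sum_{k=1}^t g_{k,i}^2 + \epsilon$, hence
\begin{equation*}
\frac{\sum_{j=1}^t \beta_1^{t-j} g_{j,i}^2}{\sum_{k=1}^t g_{k,i}^2 + \epsilon} \;\leq\; \sum_{j=1}^t \frac{\beta_1^{t-j} g_{j,i}^2}{\sum_{k=1}^j g_{k,i}^2 + \epsilon},
\end{equation*}
and putting everything together delivers~\eqref{eq:sadam_mt}. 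The only small subtlety is the usual indeterminate form $0/0$ when $g_{k,i}=0$ for all $k\leq j$: this is handled exactly as in the footnote of Lemma~\ref{lem: grad_norm}, using $\epsilon>0$ (or the convention $0/0 = 0$, since in that case the numerator vanishes as well). I do not expect any genuine obstacle; the one place to double-check is the coefficient bookkeeping when applying Cauchy--Schwarz, to confirm that the factor $(1-\beta_1)^2$ in the numerator reduces to the single $(1-\beta_1)$ that appears in the statement.
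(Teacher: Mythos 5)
Your proposal is correct and follows essentially the same route as the paper's proof: unroll $m_t$ and $\hat v_t$, apply Cauchy--Schwarz with the geometric-series bound to reduce $(1-\beta_1)^2$ to $(1-\beta_1)$, and then replace the common denominator $\sum_{k=1}^t g_{k,i}^2+\epsilon$ by the smaller per-term denominator $\sum_{k=1}^j g_{k,i}^2+\epsilon$ using $j\leq t$. The only cosmetic difference is that here the $0/0$ caveat is unnecessary, since $\epsilon>0$ is retained in every denominator throughout.
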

\begin{proof}
We have
\begin{align}
\| m_t \|^2_{\hat{v}_t^{-1}} &= \sum_{i=1}^d \frac{m_{t, i}^2}{\hat{v}_{t, i}} = \sum_{i=1}^d \frac{m_{t, i}^2}{{v}_{t, i} + \frac{\epsilon}{t}} =t(1-\beta_1)^2 \sum_{i=1}^d \frac{\left(\sum_{j=1}^t \beta_1^{t-j} g_{j, i}\right)^2}{\sum_{k=1}^t g_{k, i}^2+\epsilon} \notag \\
&\leq t(1-\beta_1) \sum_{i=1}^d \frac{\sum_{j=1}^t \beta_1^{t-j} g_{j, i}^2}{\sum_{k=1}^t g_{k, i}^2+\epsilon} \notag \\
&\leq t(1-\beta_1) \sum_{i=1}^d \sum_{j=1}^t \frac{ \beta_1^{t-j} g_{j, i}^2}{\sum_{k=1}^j g_{k, i}^2+\epsilon},
\end{align}
where we used the definitions $\hat{v}_{t, i} = \frac{1}{t}\sum_{k=1}^t g_{k, i}^2 + \frac{\epsilon}{t}$ and the expression for $m_t$ from~\eqref{eq: mt_vt} in the first line.
First inequality follows from Cauchy-Schwarz and sum of geometric series; and the last inequality is by  $j \leq t$.
\end{proof}

\begin{lemma}[Bound for $\sum_{t=1}^T \alpha_t \| m_t \|^2_{\hat{v}_t^{-1}}$]
\label{lem: sadam_sum_grad_norms}
Under Assumption~\ref{as:1}, $\beta_1 < 1$, $\epsilon > 0$, and the definitions of $\alpha_t$, $m_t$, $v_t$, $\hat v_t$ in \sadam, it holds that
\begin{equation}
\sum_{t=1}^T \alpha_t \| m_t \|^2_{\hat{v}_t^{-1}} \leq \alpha \sum_{i=1}^d \log\left( \frac{\sum_{t=1}^T g_{t, i}^2}{\epsilon}+1 \right).
\end{equation}
\end{lemma}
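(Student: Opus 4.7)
The plan is to plug the bound from Lemma \ref{lem: first_lem_sadam} directly into the sum, exploit the specific \sadam step size $\alpha_t = \alpha/t$ to cancel the $t$ factor appearing in \eqref{eq:sadam_mt}, swap the order of summation, and then collapse the geometric sum in $\beta_1$. Everything then reduces to a standard \adagrad-style telescoping argument.

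More concretely, first I would apply \eqref{eq:sadam_mt} and use $\alpha_t = \alpha/t$ to write
\begin{equation*}
\alpha_t \| m_t \|^2_{\hat v_t^{-1}} \;\leq\; \alpha(1-\beta_1)\sum_{i=1}^d \sum_{j=1}^t \frac{\beta_1^{t-j} g_{j,i}^2}{\sum_{k=1}^j g_{k,i}^2 + \epsilon}.
\end{equation*}
The crucial observation is that the factor $t$ in \eqref{eq:sadam_mt} is exactly cancelled by $\alpha_t$; this is why the analysis of \sadam admits a logarithmic bound where the analogous one for \adamnc only gives $\sqrt T$. Summing over $t = 1,\dots,T$ and swapping the order of summation yields
\begin{equation*}
\sum_{t=1}^T \alpha_t \| m_t \|^2_{\hat v_t^{-1}} \;\leq\; \alpha(1-\beta_1)\sum_{i=1}^d \sum_{j=1}^T \frac{g_{j,i}^2}{\sum_{k=1}^j g_{k,i}^2 + \epsilon} \sum_{t=j}^T \beta_1^{t-j}.
\end{equation*}
Bounding the innermost geometric sum by $\sum_{t=j}^T \beta_1^{t-j}\le 1/(1-\beta_1)$ cancels the prefactor $(1-\beta_1)$.

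It then remains to show, for each coordinate $i$, that
\begin{equation*}
\sum_{j=1}^T \frac{g_{j,i}^2}{\sum_{k=1}^j g_{k,i}^2 + \epsilon} \;\leq\; \log\!\left( \frac{\sum_{t=1}^T g_{t,i}^2}{\epsilon} + 1 \right).
\end{equation*}
This is the classical \adagrad ``log trick.'' Setting $S_j = \sum_{k=1}^j g_{k,i}^2 + \epsilon$ with $S_0 = \epsilon > 0$, the sequence $\{S_j\}$ is nondecreasing and strictly positive, so the concavity inequality $1 - x \leq -\log x$ applied at $x = S_{j-1}/S_j \in (0,1]$ gives
\begin{equation*}
\frac{g_{j,i}^2}{S_j} \;=\; \frac{S_j - S_{j-1}}{S_j} \;=\; 1 - \frac{S_{j-1}}{S_j} \;\leq\; \log\frac{S_j}{S_{j-1}}.
\end{equation*}
Summing in $j$ telescopes to $\log(S_T / S_0) = \log\!\left(\sum_{t=1}^T g_{t,i}^2/\epsilon + 1\right)$, as desired. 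Combining this coordinatewise bound with the previous display yields exactly the claimed inequality.

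There is no real obstacle here beyond the bookkeeping; the only subtlety is verifying that the convention $0/0 = 0$ (noted in the proof of Lemma \ref{lem: grad_norm}) is not needed because $\epsilon > 0$ keeps every denominator $S_j$ strictly positive, so the log trick applies unconditionally.
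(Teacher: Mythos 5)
Your proposal is correct and follows essentially the same route as the paper: plug in Lemma~\ref{lem: first_lem_sadam}, cancel the factor $t$ against $\alpha_t=\alpha/t$, swap the order of summation, bound the geometric sum by $1/(1-\beta_1)$, and finish with the logarithmic sum inequality. The only difference is that you supply a self-contained telescoping proof of that final inequality (via $1-x\leq-\log x$), whereas the paper simply cites it from \citet{duchi2010adaptive_techreport} and \citet{hazan2007logarithmic}; your derivation of it is correct.
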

\begin{proof}
We have, by Lemma~\ref{lem: first_lem_sadam}
\begin{align}
\sum_{t=1}^T \alpha_t \| m_t \|^2_{\hat{v}_t^{-1}} &= \sum_{t=1}^T \alpha_t t (1-\beta) \sum_{i=1}^d \sum_{j=1}^t \frac{\beta_1^{t-j}g_{j, i}^2}{\sum_{k=1}^j g_{k, i}^2 + \epsilon} \notag \\
&= \alpha (1-\beta) \sum_{i=1}^d \sum_{t=1}^T \sum_{j=1}^t \frac{\beta_1^{t-j}g_{j, i}^2}{\sum_{k=1}^j g_{k, i}^2 + \epsilon} \notag \\
&= \alpha (1-\beta) \sum_{i=1}^d \sum_{j=1}^T \sum_{t=j}^T \frac{\beta_1^{t-j}g_{j, i}^2}{\sum_{k=1}^j g_{k, i}^2 + \epsilon} \notag \\
&\leq \alpha \sum_{i=1}^d \sum_{j=1}^T \frac{g_{j, i}^2}{\sum_{k=1}^j g_{k, i}^2 + \epsilon} \leq \alpha \sum_{i=1}^d \log\left( \frac{\sum_{t=1}^T g_{t, i}^2}{\epsilon} + 1 \right),
\end{align}
where the second equality is by the definition of $\alpha_t$ and the third equality is by changing the order of summation.
Moreover, first inequality is by the sum of geometric series and the last inequality is due to the fact that
\begin{align}
  \sum_{j=1}^T \frac{a_j}{\sum_{k=1}^j a_k + \epsilon} \leq \log
  \left(\frac{\sum_{j=1}^Ta_j}{\epsilon} + 1\right),
\end{align}
for nonnegative $a_1, \ldots, a_T$ and $\epsilon > 0$ \textendash\ see e.g.,
\citet[Lemma 12]{duchi2010adaptive_techreport} and \citet[Lemma 11]{hazan2007logarithmic}.
\end{proof}
We now restate Theorem~\ref{th: sadam} and present its proof.
\begin{reptheorem}{th: sadam}
Let Assumption~\ref{as:1} hold and $f_t$ be $\mu$-strongly
convex, $\forall t$. Then, if $\beta_1 < 1$, $\epsilon > 0$, and $\alpha \geq \frac{G^2}{\mu}$, \hyperref[alg:sadam]{\sadam} achieves
\begin{align}
R(T) \leq \frac{\beta_1 d G D}{1-\beta_1}  + \frac{\alpha}{1-\beta_1} \sum_{i=1}^d \log\left( \frac{\sum_{t=1}^T g_{t, i}^2}{\epsilon}+1 \right).\notag
\end{align}
\end{reptheorem}
\begin{proof}
    Let $x= \argmin_{y\in\cX} \sum_{t=1}^Tf_t(y)$.  In
    \Cref{th:amsgrad} we used convexity only once: going from
    $R(T)$ to $\sum_{t=1}^T\lr{g_t,x_t-x}$. Instead, strong convexity
    gives us
    $f_t(x) \geq f_t(x_t) + \langle g_t, x - x_t \rangle +
    \frac{\mu}{2} \| x_t - x \|^2$, which combined for all $t$ yields
    \begin{equation}\label{eq: sadam_regret}
        R(T)=\sum_{t=1}^T f_t(x_t)-f_t(x) \leq
        \sum_{t=1}^T\lr{g_t,x_t-x} -
        \frac{\mu}{2}\sum_{t=1}^T\n{x_t-x}^2.
    \end{equation}
    We want to estimate $\sum_{t=1}^T\lr{g_t,x_t-x}$. Similarly
to~\eqref{eq: new_ams_allterms}, we have
\begin{equation}
\sum_{t=1}^T \langle g_t, x_t - x \rangle \leq \frac{\beta_1}{1-\beta_1} \langle m_T, x_T - x \rangle + \sum_{t=1}^T \langle m_t, x_t - x\rangle + \frac{\beta_1}{1-\beta_1} \sum_{t=1}^T \langle m_{t-1}, x_{t-1} - x_t \rangle.\label{eq: sadam_main1}
\end{equation}

$\bullet$ \emph{Bound for $\sum_{t=1}^T \langle m_t, x_t - x \rangle$.}

We proceed similarly to~\eqref{eq: new_ams_thirdterm_1} and~\eqref{eq:
    new_ams_reg_bd}. The only change is that now we have $\hat{v}_t$ instead of $\hat{v}_t^{1/2}$
\begin{align}
 \langle m_t, x_t - x \rangle &\leq \frac{1}{2\a_{t-1}} \| x_t - x \|^2_{\hat{v}_{t-1}} - \frac{1}{2\a_t}\| x_{t+1} - x \|^2_{\hat{v}_t} + \frac{1}{2}\sum_{i=1}^d \left( \frac{\hat{v}_{t, i}}{\alpha_t} - \frac{\hat{v}_{t-1, i}}{\alpha_{t-1}} \right) ( x_{t, i} - x_i )^2 + \frac{\alpha_t}{2} \| m_t \|^2_{\hat{v}_t^{-1}}. \notag
\end{align}
We sum the above inequality and use the fact that $\frac{1}{\alpha_0} \| x_1 - x \|^2_{\hat{v}_0} = 0$ to obtain
\begin{align}
\sum_{t=1}^T \langle m_t, x_t - x \rangle &\leq \sum_{t=1}^T \sum_{i=1}^d \left( \frac{\hat{v}_{t, i}}{2\alpha_t} - \frac{\hat{v}_{t-1, i}}{2\alpha_{t-1}} \right) (x_{t, i}-x_i)^2 + \sum_{t=1}^T \frac{\alpha_t}{2} \| m_t \|^2_{\hat{v}_t^{-1}}.
\end{align}

$\bullet$ \emph{Bound for $\sum_{t=1}^T\langle m_{t-1}, x_{t-1} - x_t \rangle$}

This bound will be similar to the one we derived for Theorem~\ref{th:amsgrad}. The main change in the calculations of~\eqref{eq: sum_of_subseq} is that we will have $\hat{v}_t$ instead of $\hat{v}_t^{1/2}$ for using H\"older's inequality and nonexpansiveness
\begin{equation}
\sum_{t=1}^T \langle m_{t-1}, x_{t-1}- x_t \rangle \leq \sum_{t=1}^{T-1} \alpha_t \| m_t \|^2_{\hat{v}_t^{-1}}\leq \sum_{t=1}^{T} \alpha_t \| m_t \|^2_{\hat{v}_t^{-1}}.\label{eq: sadam_sec_term}
\end{equation}

We collect these estimations in~\eqref{eq: sadam_main1} and
\eqref{eq: sadam_regret} to derive
\begin{align}
R(T)=\sum_{t=1}^T f_t(x_t) - f_t(x) &\leq \frac{\beta_1}{1-\beta_1} \langle m_T, x_T - x \rangle + \frac{1+\beta_1}{2(1-\beta_1)} \sum_{t=1}^T \alpha_t \| m_t \|^2_{\hat{v}_t^{-1}} \notag \\
&+ \sum_{t=1}^T \sum_{i=1}^d \left( \frac{\hat{v}_{t, i}}{2\alpha_t} - \frac{\hat{v}_{t-1, i}}{2\alpha_{t-1}} \right) (x_{t, i}-x_i)^2 - \sum_{t=1}^T \sum_{i=1}^d \frac{\mu}{2} (x_{t, i} - x_i )^2.\label{eq: sadam_main2}
\end{align}
We collect the last two terms and use the assumption on the step size $\alpha \geq \frac{G^2}{\mu}$ and the
definition $\hat{v}_{t, i} = \frac{1}{t} \sum_{j=1}^t g_{j, i}^2 +
\frac{\epsilon}{t}$ to derive
\begin{align}
 \frac{\hat{v}_{t, i}}{2\alpha_t} - \frac{\hat{v}_{t-1, i}}{2\alpha_{t-1}} - \frac{\mu}{2}  =  \frac{g_{t, i}^2}{2\alpha} - \frac{\mu}{2} \leq 0.\notag
\end{align}

Thus,~\eqref{eq: sadam_main2} becomes
\begin{equation}
\sum_{t=1}^T f_t(x_t) - f_t(x) \leq \frac{\beta_1}{1-\beta_1} \langle m_T, x_T - x \rangle + \frac{1+\beta_1}{2(1-\beta_1)} \sum_{t=1}^T \alpha_t \| m_t \|^2_{\hat{v}_t^{-1}}. \notag
\end{equation}
We finalize by using $\frac{1+\beta_1}{2} \leq 1$, Lemma~\ref{lem: sadam_sum_grad_norms} for the last term, and $\|m_t\|_{\infty} \leq G$, $\| x_t - x \|_\infty \leq D$ for the first term
\begin{equation*}
\sum_{t=1}^T f_t(x_t) - f_t(x) \leq \frac{\beta_1 d G D}{1-\beta_1}  + \frac{\alpha}{1-\beta_1} \sum_{i=1}^d \log\left( \frac{\sum_{t=1}^T g_{t, i}^2}{\epsilon}+1 \right).
\qedhere
\end{equation*}
\end{proof}

\subsection{Proof for Zeroth order \adam}
We restate Proposition~\ref{prop: zero} and provide its proof.
\begin{repproposition}{prop: zero}
Assume that $f$ is convex, $L$-smooth, and $L_c$-Lipschitz, $\mathcal{X}$ is compact with diameter $D$.
Then ZO-AdaMM with $\beta_1, \beta_2 < 1$, $\gamma =\frac{
\beta_1^2}{\beta_2} < 1$ achieves
\begin{equation*}
\mathbb{E}\left[ \sum_{t=1}^T f_{t, \mu}(x_t) - f_{t, \mu}(x_\star) \right] \leq \frac{D^2\sqrt{T}}{2\a(1-\b_1)}\sum_{i=1}^d \mathbb{E}\left[ \hv^{1/2}_{T,i}\right] 
+\frac{\alpha\sqrt{1+\log T}}{\sqrt{(1-\beta_2)(1-\gamma)}} \sum_{i=1}^d
    \sqrt{\sum_{t=1}^T \mathbb{E}\left[ \hat{g}_{t,i}^2\right]}.
\end{equation*}
\end{repproposition}
\begin{proof}
We first note that ZO-AdaMM~\cite{chen2019zo} corresponds to using \amsgrad with $\hat{g}_t$ as the gradient input, rather than the true gradient $g_t$.
Therefore, we follow the proof structure of Theorem~\ref{th:amsgrad} with $\hat{g}_t$ as gradient input (instead of the true gradient $g_t$), until~\eqref{eq: ams_final_est}:
\begin{equation}
\sum_{t=1}^T \langle \hat{g}_t, x_t - x \rangle \leq \frac{D^2\sqrt{T}}{2\a(1-\b_1)}\sum_{i=1}^d \hv^{1/2}_{T,i} +\frac{\alpha\sqrt{1+\log T}}{\sqrt{(1-\beta_2)(1-\gamma)}} \sum_{i=1}^d
    \sqrt{\sum_{t=1}^T \hat{g}_{t,i}^2}\label{eq: zero_main1}
\end{equation}

With this bound in hand, we proceed as in the proof of~\citet[Proposition 4]{chen2019zo}. 
Specifically, note that $\mathbb{E}_t \left[ \hat{g}_t \right] = \nabla f_{t, \mu}(x_t)$ where the randomness is due to selection of the seed $\xi_t$ and the random vector $u$ in~\eqref{eq: zero_grad_def}.
Then, taking the full expectation and using convexity gives
\[\mathbb{E}\left[ \sum_{t=1}^T f_{t, \mu}(x_t) - f_{t, \mu}(x) \right] \leq \mathbb{E}\left[ \sum_{t=1}^T \langle \hat{g}_t, x_t - x \rangle \right].\]
Our claim then follows by applying Jensen's inequality,
after taking expectations in~\eqref{eq: zero_main1}.
\end{proof}

\subsection{Proofs for nonconvex \amsgrad}
\begin{lemma}(Bound for $\sum_{t=1}^T  \| \alpha_t \hat v_t^{-1/2} m_t \|^2$). \label{lem: nonconv bound}
Under \Cref{as:2}, $\beta_1<1$, $\beta_2 < 1$, $\gamma =
\frac{\beta_1^2}{\beta_2} < 1$, and the definitions of $\alpha_t$, $m_t$, $v_t$, $\hat v_t$ in \amsgrad, it holds that
\begin{equation}\label{eq: nonconvex_est1}
\sum_{t=1}^T \| \alpha_t \hat{v}_t^{-1/2}m_t \|^2 \leq \frac{d(1-\b_1)^2\a^2(1+\log T)}{(1-\beta_2)(1-\gamma)}.
\end{equation}
\end{lemma}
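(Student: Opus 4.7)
The plan is to bound $\sum_{i=1}^d m_{t,i}^2/\hat v_{t,i}$ by a constant depending only on $\beta_1,\beta_2,\gamma,d$, and then the $\alpha_t^2=\alpha^2/t$ factors will sum via the harmonic bound $\sum_{t=1}^T 1/t\le 1+\log T$ to produce the $(1+\log T)$ on the right-hand side.

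First, I would write
\[
\sum_{t=1}^T\|\alpha_t \hat v_t^{-1/2} m_t\|^2 \;=\; \sum_{t=1}^T \alpha_t^2 \sum_{i=1}^d \frac{m_{t,i}^2}{\hat v_{t,i}} \;\le\; \sum_{t=1}^T \alpha_t^2 \sum_{i=1}^d \frac{m_{t,i}^2}{v_{t,i}},
\]
using $\hat v_{t,i}\ge v_{t,i}$ coordinatewise (from the $\max$ step in \amsgrad). Then I would plug in the closed forms $m_t=(1-\beta_1)\sum_{j=1}^t \beta_1^{t-j}g_j$ and $v_t=(1-\beta_2)\sum_{j=1}^t \beta_2^{t-j}g_j^2$ from~\eqref{eq: mt_vt}, so that per-coordinate
\[
\frac{m_{t,i}^2}{v_{t,i}} \;=\; \frac{(1-\beta_1)^2}{1-\beta_2}\cdot\frac{\bigl(\sum_{j=1}^t \beta_1^{t-j} g_{j,i}\bigr)^2}{\sum_{j=1}^t \beta_2^{t-j} g_{j,i}^2}.
\]

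The key step is a Cauchy--Schwarz applied to the numerator after splitting $\beta_1^{t-j}g_{j,i}=\bigl(\beta_1^{t-j}/\beta_2^{(t-j)/2}\bigr)\cdot\bigl(\beta_2^{(t-j)/2}g_{j,i}\bigr)$, which gives
\[
\Bigl(\sum_{j=1}^t \beta_1^{t-j}g_{j,i}\Bigr)^2 \;\le\; \Bigl(\sum_{j=1}^t \gamma^{t-j}\Bigr)\Bigl(\sum_{j=1}^t \beta_2^{t-j} g_{j,i}^2\Bigr)
\;\le\; \frac{1}{1-\gamma}\sum_{j=1}^t \beta_2^{t-j} g_{j,i}^2,
\]
with $\gamma=\beta_1^2/\beta_2<1$. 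The $v_{t,i}$ denominator cancels, leaving $m_{t,i}^2/v_{t,i}\le (1-\beta_1)^2/[(1-\beta_2)(1-\gamma)]$, independent of $t$ and of the data. (As in~\Cref{lem: grad_norm}, coordinates for which $v_{t,i}=0$ also have $m_{t,i}=0$ and should be excluded or handled by the convention $0/0=0$; here since $\hat v_0=\epsilon\mathbf 1>0$, $\hat v_{t,i}>0$ throughout, so nothing extra is needed.)

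Summing over $i$ gives the factor $d$, and then
\[
\sum_{t=1}^T \|\alpha_t \hat v_t^{-1/2} m_t\|^2 \;\le\; \frac{d(1-\beta_1)^2}{(1-\beta_2)(1-\gamma)}\sum_{t=1}^T \frac{\alpha^2}{t} \;\le\; \frac{d(1-\beta_1)^2\alpha^2 (1+\log T)}{(1-\beta_2)(1-\gamma)},
\]
which is the claimed bound. I don't anticipate any real obstacle; the only thing to get right is choosing the Cauchy--Schwarz split so that the exponent $\beta_1^2/\beta_2$ emerges as the geometric ratio $\gamma$, which is the standard trick already used in~\Cref{lem: grad_norm} (the generalized Hölder bound there is sharper by a square root, but for this second-moment estimate plain Cauchy--Schwarz suffices).
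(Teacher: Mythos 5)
Your proposal is correct and follows essentially the same route as the paper: bound $\hat v_{t,i}\ge v_{t,i}$, insert the closed forms from~\eqref{eq: mt_vt}, apply Cauchy--Schwarz to extract $\sum_j\gamma^{t-j}\le\frac{1}{1-\gamma}$, and sum $\alpha_t^2=\alpha^2/t$ harmonically. The paper phrases the key step as the Engel-form inequality~\eqref{eq: new_ineq} rather than your explicit weight split, but the two are the same application of Cauchy--Schwarz and yield identical bounds.
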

\begin{proof}
We first note the inequality for positive numbers
\begin{equation}
\frac{(a_1+\dots+a_t)^2}{b_1+\dots+b_t} \leq \frac{a_1^2}{b_1}+ \dots + \frac{a_t^2}{b_t},\label{eq: new_ineq}
\end{equation}
which is a consequence of Cauchy-Schwarz inequality.

Now we have 
\begin{align}
 \| \alpha_t\hat{v}_t^{-1/2} m_{t}  \|^2 &=  \sum_{i=1}^d \alpha_t^2 \frac{m_{t,i}^2}{\hat{v}_{t,i}} \leq \sum_{i=1}^d \alpha_t^2 \frac{m_{t,i}^2}{{v}_{t,i}}\notag \\
&= \sum_{i=1}^d \alpha_t^2 \frac{\left( \sum_{j=1}^t(1-\beta_1)\beta_1^{t-j}{g}_{j,i}\right)^2}{\sum_{j=1}^t(1-\beta_2) \beta_2^{t-j} g_{j, i}^2}\notag \\
&=\frac{(1-\beta_1)^2}{1-\beta_2} \sum_{i=1}^d \alpha_t^2 \frac{\left( \sum_{j=1}^t\beta_1^{t-j}{g}_{j,i}\right)^2}{\sum_{j=1}^t \beta_2^{t-j} g_{j, i}^2}\notag \\
&\leq\frac{(1-\beta_1)^2}{1-\beta_2} \sum_{i=1}^d \alpha_t^2 \sum_{j=1}^t\frac{\beta_1^{2(t-j)}{g}_{j,i}^2}{ \beta_2^{t-j} g_{j, i}^2}=\frac{(1-\beta_1)^2}{1-\beta_2}\sum_{i=1}^d \alpha_t^2 \sum_{j=1}^t\gamma^{t-j}\notag \\
&\leq\frac{d(1-\beta_1)^2}{(1-\beta_2)(1-\gamma)} \alpha_t^2, \label{eq: nc_ams1_last2}
\end{align}
where the first inequality uses $\hat{v}_{t, i} \geq v_{t, i}$, and the second equality uses the expressions from~\eqref{eq: mt_vt}.
The second inequality is by~\eqref{eq: new_ineq}, and the final one by
the sum of geometric series with $\gamma=\frac{\b_1^2}{\b_2}$. Since $\a_t^2
=\frac{\a^2}{t}$, the final inequality~\eqref{eq: nonconvex_est1} follows.
\end{proof}

The reader could notice that all proofs so far were based on
\Cref{lem: decop}. In fact, we can formulate a more general statement, which will
be the key in the nonconvex settings.
\begin{lemma}\label{lem: nonconv_decomp}
Let $m_t = \beta_1 m_{t-1} + (1-\beta_1) g_t$ and
$A_t\in \mathbb{R}^d$, $\forall t=1,\dots,T$. Then it follows that
\begin{equation}
\langle A_t, g_t \rangle = \frac{1}{1-\beta_1}\bigg( \langle A_t, m_t \rangle - \langle A_{t-1}, m_{t-1} \rangle \bigg) + \langle A_{t-1}, m_{t-1} \rangle + \frac{\beta_1}{1-\beta_1} \langle A_{t-1} - A_{t}, m_{t-1} \rangle.\label{eq: nc_lemma}
\end{equation}
\end{lemma}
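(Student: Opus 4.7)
The plan is to mirror the short algebraic proof of Lemma~\ref{lem: decop}, since the new statement is exactly the same identity in disguise, with the fixed vector $x_t - x$ replaced by an arbitrary sequence $A_t$. First I would invert the recursion $m_t = \beta_1 m_{t-1} + (1-\beta_1) g_t$ to express $g_t = \frac{1}{1-\beta_1} m_t - \frac{\beta_1}{1-\beta_1} m_{t-1}$, then take the inner product with $A_t$ to obtain
\[
\langle A_t, g_t \rangle = \frac{1}{1-\beta_1}\langle A_t, m_t\rangle - \frac{\beta_1}{1-\beta_1}\langle A_t, m_{t-1}\rangle.
\]

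The only non-mechanical step is to massage the last term so that a telescoping structure emerges. I would do this by adding and subtracting $\langle A_{t-1}, m_{t-1}\rangle$, writing
\[
\langle A_t, m_{t-1}\rangle = \langle A_{t-1}, m_{t-1}\rangle + \langle A_t - A_{t-1}, m_{t-1}\rangle,
\]
and then collecting. After the split, the coefficient on $\langle A_{t-1}, m_{t-1}\rangle$ is $-\frac{\beta_1}{1-\beta_1}$, which equals $-\frac{1}{1-\beta_1} + 1$; the $-\frac{1}{1-\beta_1}$ piece combines with the original $\frac{1}{1-\beta_1}\langle A_t, m_t\rangle$ to produce the telescoping difference $\frac{1}{1-\beta_1}(\langle A_t, m_t\rangle - \langle A_{t-1}, m_{t-1}\rangle)$, while the $+1$ piece supplies the anchor term $\langle A_{t-1}, m_{t-1}\rangle$. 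Flipping the sign on the remaining term gives $\frac{\beta_1}{1-\beta_1}\langle A_{t-1} - A_t, m_{t-1}\rangle$, matching the statement exactly.

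There is no real obstacle here: the identity is purely algebraic and the derivation is two or three displayed lines. What is worth emphasizing is that the only two moves used are (i) the inversion of the EMA recursion for $g_t$, and (ii) the shift $\langle A_t, m_{t-1}\rangle = \langle A_{t-1}, m_{t-1}\rangle + \langle A_t - A_{t-1}, m_{t-1}\rangle$~--~exactly the two ingredients of Lemma~\ref{lem: decop}, applied now to a generic $A_t$ rather than to $A_t = x_t - x$. Recording the decomposition in this general form is precisely what is needed for the nonconvex analysis of Theorem~\ref{th: nonconvex}, where $A_t$ will be instantiated with a gradient-like vector arising from a smoothness step rather than with a distance to a comparator; the more specific Lemma~\ref{lem: decop} is not directly applicable there, whereas this generalization is.
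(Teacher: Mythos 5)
Your proposal is correct and follows exactly the route the paper intends: the paper gives no separate proof of this lemma, stating only that it ``relies on the same algebra as in Lemma~\ref{lem: decop}'', and your two steps (inverting the EMA recursion for $g_t$ and shifting $\langle A_t, m_{t-1}\rangle$ to $\langle A_{t-1}, m_{t-1}\rangle$ plus a difference term) are precisely that algebra applied to a generic $A_t$. The coefficient bookkeeping via $\frac{\beta_1}{1-\beta_1}=\frac{1}{1-\beta_1}-1$ checks out and reproduces the stated identity.
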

For convex case, we plugged in $A_t = x_t - x$, while  for the
nonconvex case we will use $A_t = \alpha_t \hat{v}_t^{-1/2}\nabla
f(x_t)$. Obviously, its proof relies on the same algebra as in~\Cref{lem: decop}.

We move onto restating Theorem~\ref{th: nonconvex} and presenting its proof.
\begin{reptheorem}{th: nonconvex}
Under \Cref{as:2}, $\beta_1<1$, $\beta_2 < 1$, and $\gamma =
\frac{\beta_1^2}{\beta_2} < 1$ \amsgrad achieves
\begin{multline*}
\frac{1}{T} \sum_{t=1}^T \mathbb{E}\left[ \| \nabla f(x_t) \|^2 \right] \leq \frac{1}{\sqrt{T}}\bigg[ \frac{G}{\alpha}\left(f(x_1) - f(x_\star)\right)+ \frac{G^3}{(1-\beta_1)}\| \hat{v}_0^{-1/2} \|_1+ \frac{G^3d}{4L\alpha(1-\beta_1)} \\ 
+\frac{GLd\alpha(1-\beta_1)(1+\log T)}{(1-\beta_2)(1-\gamma)}\bigg].
\end{multline*}
\end{reptheorem}
\begin{proof}
    Let $A_t = \alpha_t \hat{v}_t^{-1/2}\nabla f(x_t) $ for $t\geq 1$
    and $A_0=A_1$. By summing~\eqref{eq: nc_lemma} over $t=1,\dots, T$ and using that
$m_0=0$, $\langle A_0, m_{0} \rangle =0$, $\langle A_1 - A_{0}, m_{0}
\rangle =0$, we obtain
\begin{align}\label{eq:nonconv main1}
  \sum_{t=1}^T\lr{A_t, g_t} &= \frac{1}{1-\b_1}\lr{A_T,m_T} +
  \sum_{t=1}^{T-1}\lr{A_t,m_t} + \frac{\b_1}{1-\b_1}\sum_{t=1}^{T}\lr{A_{t-1}-A_{t},m_{t-1}}\notag\\&=\frac{\b_1}{1-\b_1}\lr{A_T,m_T} +
  \sum_{t=1}^{T}\lr{A_t,m_t} + \frac{\b_1}{1-\b_1}\sum_{t=1}^{T-1}\lr{A_{t}-A_{t+1},m_{t}}.
\end{align}
We are going to derive bounds for \eqref{eq:nonconv main1} and
then take expectation to get an estimate for
$\mathbb{E}\big[\n{\nabla f(x_t)}^2\big]$.
    
To this end, we note that for the expectation conditioned on the
history until selecting $g_t$, one has
$\mathbb{E}_t [g_t] = \nabla f(x_t)$, since under this condition
$\hat{v}_{t-1}$ is deterministic as it does not depend on $g_{t}$. It is tempting to compute
$\mathbb{E}_t\big[\n{\nabla f(x_t)}^2\big]$ by using
$\mathbb{E}_t\big[\langle A_t, g_t\rangle\big] =
\mathbb{E}_t\big[\langle \alpha_t \hat{v}_t^{-1/2}\nabla f(x_t),
g_t\rangle \big]$.  Unfortunately, this is not feasible, as $\hat{v_t}$
does depend on $g_t$. Instead, we bound $\langle A_t, g_t\rangle$
from below by a more suitable random variable for taking conditional expectation $\mathbb{E}_t$ .

    $\bullet$ \emph{Bound for} $\langle A_{t}, g_{t} \rangle$

First, we note
\begin{align}\label{eq:nonconv A_t g_t}
\langle A_t, g_t \rangle &= \langle \alpha_t \hat{v}_t^{-1/2}\nabla f(x_t), g_t \rangle = \langle \alpha_{t-1} \hat{v}_{t-1}^{-1/2}\nabla f(x_t), g_t \rangle - \langle \nabla f(x_t), \big(\alpha_{t-1}\hat{v}_{t-1}^{-1/2}-\alpha_t \hat{v}_t^{-1/2}\big) g_t \rangle.
\end{align}
To simplify derivations, we set $\a_0  = \a = \a_1$.  Now, for the last
term in the right-hand side we have
\begin{align} 
\langle \nabla f(x_t), \big(\alpha_{t-1}\hat{v}_{t-1}^{-1/2}-\alpha_t \hat{v}_t^{-1/2}\big)g_t \rangle &\leq \| \nabla f(x_t) \|_{\infty} \|  \alpha_{t-1} \hat{v}_{t-1}^{-1/2} - \alpha_t \hat{v}_t^{-1/2}\|_1 \| g_{t} \|_{\infty} \notag \\
&\leq G^2\left( \| \alpha_{t-1} \hat{v}_{t-1}^{-1/2} \|_1 - \| \alpha_{t} \hat{v}_{t}^{-1/2} \|_1 \right),\label{eq: l1_dif_term}
\end{align}
where we used H\"older's inequality, and
$\alpha_{t-1}\hat{v}_{t-1,i}^{-1/2}\geq \alpha_t \hat{v}_{t,i}^{-1/2}$ (note
that for $t=1$, this is still true, since $\hv_1\geq \hv_0$ and $\a_0=\a_1$).  Combining
\eqref{eq: l1_dif_term} and \eqref{eq:nonconv A_t g_t} yields
\begin{align}\label{eq:nonconv bound1}
\langle A_t, g_t \rangle \geq \langle \alpha_{t-1} \hat{v}_{t-1}^{-1/2}\nabla f(x_t), g_t \rangle - G^2(\| \alpha_{t-1} \hat{v}_{t-1}^{-1/2} \|_1 - \| \alpha_t \hat{v}_t^{-1/2} \|_1).
\end{align}
Clearly, the term
$\langle \alpha_{t-1} \hat{v}_{t-1}^{-1/2}\nabla f(x_t), g_t \rangle$
is more convenient for taking $\mathbb{E}_t$. We will do it right
after we bound the right-hand side of~\eqref{eq:nonconv main1}.  Let us
focus on each term of~\eqref{eq:nonconv main1} separately.
 
$\bullet$ \emph{Bound for $\langle A_t - A_{t+1}, m_{t}\rangle$}

We rearrange terms to obtain
\begin{align}\label{eq:nonconv_At}
\langle A_{t} - A_{t+1}, &m_{t}\rangle = \langle \alpha_{t}\hat{v}_{t}^{-1/2}\nabla f(x_{t})- \alpha_{t+1} \hat{v}_{t+1}^{-1/2}\nabla f(x_{t+1}), m_{t} \rangle \notag \\
&=\langle \alpha_{t} \hat{v}_{t}^{-1/2} \nabla f(x_{t+1}) -\alpha_{t+1} \hat{v}_{t+1}^{-1/2}\nabla f(x_{t+1}), m_{t} \rangle + \langle  \alpha_{t}\hat{v}_{t}^{-1/2}\nabla f(x_{t})-\alpha_{t}\hat{v}_{t}^{-1/2}\nabla f(x_{t+1}), m_{t} \rangle\notag \\
&=\langle \nabla f(x_{t+1}),\big(\alpha_{t}\hat{v}_{t}^{-1/2}-\alpha_{t+1} \hat{v}_{t+1}^{-1/2}\big) m_{t} \rangle + \langle \nabla f(x_{t})-\nabla f(x_{t+1}), \alpha_{t} \hat{v}_{t}^{-1/2}m_{t} \rangle.
\end{align}
For the first term we use almost the same inequality as in \eqref{eq: l1_dif_term}
\begin{align*} 
\langle \nabla f(x_{t+1}), (\alpha_{t}\hat{v}_{t}^{-1/2}-\alpha_{t+1}\hat{v}_{t+1}^{-1/2})m_{t} \rangle &\leq \| \nabla f(x_{t+1}) \|_{\infty} \|\alpha_{t} \hat{v}_{t}^{-1/2}-\alpha_{t+!}\hat{v}_{t+1}^{-1/2} \|_1 \| m_{t} \|_{\infty} \notag \\
&\leq G^2\left( \| \alpha_{t} \hat{v}_{t}^{-1/2} \|_1 - \| \alpha_{t+1} \hat{v}_{t+1}^{-1/2} \|_1 \right).
\end{align*}
For the second term we use smoothness of $f$ and the update rule for
$x_{t+1}$
\begin{align*}
\langle \nabla f(x_t) - \nabla f(x_{t+1}), \alpha_{t} \hat{v}_{t}^{-1/2}m_{t} \rangle &\leq \|\nabla f(x_t) - \nabla f(x_{t+1}) \| \| \alpha_{t}\hat{v}_{t}^{-1/2}m_{t} \| \notag \\
&\leq L \| x_{t+1} - x_{t} \| \| \alpha_{t}\hat{v}_{t}^{-1/2}m_{t} \| = L \| x_{t+1} - x_{t} \|^2.
\end{align*}
We apply above estimates in \eqref{eq:nonconv_At} to derive
\begin{equation}\label{eq:nonconv bound2}
\langle A_t - A_{t+1}, m_{t} \rangle \leq G^2 \left( \| \alpha_{t}\hat{v}_{t}^{-1/2} \|_1 - \| \alpha_{t+1} \hat{v}_{t+1}^{-1/2}\|_1 \right) + L \| x_{t+1} - x_{t}\|^2.
\end{equation}
 
$\bullet$ \emph{Bound for} $\langle A_{t}, m_{t} \rangle$

By the update of $x_{t+1}$ and the descent lemma, we have
\begin{align}\label{eq:nonconv bound3}
\langle A_{t}, m_{t} \rangle &=  \langle \alpha_{t}\hat{v}_{t}^{-1/2}\nabla f(x_{t}),m_{t} \rangle = \langle \nabla f(x_{t}), \alpha_{t}\hat{v}_{t}^{-1/2}m_{t} \rangle\notag\\&= \langle \nabla f(x_{t}), x_{t} - x_{t+1} \rangle \leq f(x_{t}) - f(x_{t+1}) + \frac{L}{2} \| x_{t+1} - x_{t} \|^2.
\end{align}

$\bullet$ \emph{Final bounds}
 
Combining the bounds, we obtain 
\begin{align}\label{eq:nonconv RHS1}
\text{RHS of \eqref{eq:nonconv main1}} &\leq
  \frac{\b_1}{1-\b_1}\lr{A_T,m_T} + \big(f(x_1)-f(x_{T+1}) +\frac L 2
  \sum_{t=1}^T \n{x_{t+1}-x_t}^2\big) \notag\\&\quad +\frac{\b_1G^2}{1-\b_1}\big(\n{\a_1\hv_1^{-1/2}}_1-\n{\a_T\hv_T^{-1/2}}_1\big)+\frac{\b_1L}{1-\b_1}\sum_{t=1}^{T-1}\n{x_{t+1}-x_t}^2.
\end{align}
By Young's inequality, $x_T - x_{T+1} =\alpha_T\hat{v}_T^{-1/2} m_T$,
and $\n{\nabla f(x_T)}_{\infty}\leq G$,
\begin{align*}
  \lr{A_T,m_T} = \langle \nabla f(x_T), \alpha_T\hat{v}_T^{-1/2} m_T
  \rangle \leq L \| \alpha_T \hat{v}_T^{-1/2} m_T \|^2 + \frac{1}{4L}
  \| \nabla f(x_T) \|^2\leq L \| x_{T+1}-x_T \|^2 + \frac{G^2d}{4L}.
\end{align*}
Hence, we can conclude in \eqref{eq:nonconv RHS1} 
 \begin{align}\label{eq:nonconv RHS2} 
   \text{RHS of \eqref{eq:nonconv main1}} &\leq
   \frac{\b_1 G^2 d}{4(1-\b_1)L} + \big(f(x_1)-f(x_{T+1})
                                            +\frac{L}{2}\sum_{t=1}^T
                                            \n{x_{t+1}-x_t}^2\big)+\frac{\b_1
                                            G^2}{1-\b_1}\n{\a_1
                                             \hv_1^{-1/2}}_1+\frac{\b_1L}{1-\b_1}\sum_{t=1}^{T}\n{x_{t+1}-x_t}^2\notag \\&\leq\frac{\b_1G^2d}{4(1-\b_1)L}+\big(f(x_1)-f(x_{\star})\big)+\frac{\a\b_1G^2}{1-\b_1}\n{\hv_1^{-1/2}}_1+\frac{L}{1-\b_1}\sum_{t=1}^{T}\n{x_{t+1}-x_t}^2\notag\\&\leq\frac{\b_1G^2d}{4(1-\b_1)L}+\big(f(x_1)-f(x_{\star})\big)+\frac{\a\b_1G^2}{1-\b_1}\n{\hv_1^{-1/2}}_1+\frac{dL\a^2(1-\b_1)(1+\log T)}{(1-\b_2)(1-\gamma)},
 \end{align} 
 where in the second inequality we used $f(x_T)\geq f(x_\star)$,
$\a_1=\a$, and $\frac{1+\beta_1}{2} \leq 1$ and the final inequality
follows from \Cref{lem: nonconv bound}, as
$\sum_{t=1}^{T}\n{x_{t+1}-x_t}^2 =
\sum_{t=1}^T\n{\a_t\hv_t^{-1/2}m_t}^2$.

Now we analyze the left-hand side of~\eqref{eq:nonconv main1}.  Using~\eqref{eq:nonconv bound1}, we deduce
\begin{align}\label{eq:nonconv LHS1}
\text{LHS of \eqref{eq:nonconv main1}} &\geq \sum_{t=1}^T\langle 
  \alpha_{t-1} \hat{v}_{t-1}^{-1/2}\nabla f(x_t), g_t \rangle - G^2\big(\|
  \alpha_{0} \hat{v}_{0}^{-1/2} \|_1 - \|\alpha_{T} \hat{v}_{T}^{-1/2}
  \|_1\big) \notag\\ & \geq \sum_{t=1}^T\langle 
  \alpha_{t-1} \hat{v}_{t-1}^{-1/2}\nabla f(x_t), g_t \rangle - \a G^2\|
   \hat{v}_{0}^{-1/2} \|_1,
\end{align}
where we used $\a_0=\a$ and $\|\alpha_{T} \hat{v}_{T}^{-1/2}
  \|_1\geq 0$.
   
  Finally, combining \eqref{eq:nonconv RHS2}, \eqref{eq:nonconv LHS1},
  and \eqref{eq:nonconv main1}, we arrive at 
\begin{align}\label{eq:nonconv before exp}
  \sum_{t=1}^T\langle \alpha_{t-1} \hat{v}_{t-1}^{-1/2}\nabla f(x_t),
  g_t \rangle &\leq f(x_1) - f(x_\star)
                + \frac{\a\b_1G^2}{1-\b_1}\n{\hv_1^{-1/2}}_1+\a G^2\n{\hv_0^{-1/2}}_1\notag \\& \quad  + \frac{\b_1G^2d}{4(1-\b_1)L}+\frac{dL\alpha^2(1-\b_1)(1+\log
  T)}{(1-\b_2)(1-\gamma)}\notag \\&\leq  f(x_1) - f(x_\star)
                + \frac{\a G^2}{1-\b_1}\n{\hv_0^{-1/2}}_1   + \frac{\b_1G^2d}{4(1-\b_1)L}+\frac{dL\a^2(1-\b_1)(1+\log
  T)}{(1-\b_2)(1-\gamma)},
\end{align}
where we used that $\hv_{0,i}^{-1/2}\geq \hv_{1,i}^{-1/2}$.

Since $\mathbb{E}_t$ is conditioned on the history until selecting
$g_t$, $\hat v_{t-1}$ does not depend on $g_t$, $\mathbb{E}_t [g_t] = \nabla f(x_t)$, and
$\|\hat{v}_t\|_\infty \leq G^2$, we obtain
\begin{align*}
\mathbb{E}_{t} \left[ \langle \alpha_{t-1}\hat{v}_{t-1}^{-1/2} \nabla
    f(x_t), g_t \rangle \right]= \langle \alpha_{t-1}\hat{v}_{t-1}^{-1/2} \nabla f(x_t), \nabla f(x_t) \rangle = \sum_{i=1}^d \frac{\alpha_{t-1}}{\hat{v}_{t-1, i}^{1/2}} (\nabla f(x_t))_i^2 \geq \frac{\alpha}{\sqrt{T} G} \| \nabla f(x_t) \|^2.
\end{align*}
Taking the full expectation above yields
\begin{align*}
\mathbb{E} \left[ \langle \alpha_{t-1}\hat{v}_{t-1}^{-1/2} \nabla
    f(x_t), g_t \rangle \right] \geq  \frac{\alpha}{\sqrt{T} G} \mathbb{E}\left[\| \nabla f(x_t) \|^2\right].
\end{align*} 
Thus, by taking the full expectation in \eqref{eq:nonconv before exp}, we
deduce 
\begin{align*}
\frac{\alpha}{\sqrt{T}G} \sum_{t=1}^T \mathbb{E}\left[ \| \nabla
  f(x_t) \|^2 \right] &\leq f(x_1) - f(x_\star) + \frac{\a G^2}{1-\beta_1}\| \hat{v}_0^{-1/2} \|_1+ \frac{G^2d}{4L(1-\beta_1)} +\frac{dL\alpha^2(1-\beta_1)(1+\log T)}{(1-\beta_2)(1-\gamma)},
\end{align*}
from which the final bound follows immediately.
\end{proof}

\end{document}